\documentclass[sigconf]{acmart}
\usepackage{booktabs} 
\setcopyright{rightsretained}

\usepackage{multirow}
\usepackage{booktabs} 
 \usepackage{csquotes}
 \usepackage{amsmath}
 \usepackage{amssymb}
 \usepackage{setspace}
 \usepackage{graphicx} 
 \hyphenation{methods}
 \usepackage{csquotes}
 \usepackage{mathtools}
 \usepackage{algorithm}
\usepackage[noend]{algorithmic}
\usepackage{caption}

\newcommand{\oneoneiahype}{$(1+1)\text{IA}^\text{Hyp}$ }

\newcommand{\oneonerlsone}{$(1+1)~\text{RLS}_1$ }

\newcommand{\oneoneIA}{$(1+1)$~IA$^{Hyp}$~}

\newcommand{\linHD}{M$_{\text{linHD}}$~}
\newcommand{\expoF}{M$_{\text{expoF(x)}}$~}
\newcommand{\expoHD}{M$_{\text{expoHD}}$~}

\newcommand\T{\rule{0pt}{2.6ex}}      
\newcommand\B{\rule[-1.2ex]{0pt}{0pt}}


\acmDOI{10.1145/nnnnnnn.nnnnnnn}

\acmISBN{978-x-xxxx-xxxx-x/YY/MM}

\acmConference[GECCO '19]{the Genetic and Evolutionary Computation Conference 2019}{July 13--17, 2019}{Prague, Czech Republic}
\acmYear{2019}
\copyrightyear{2019}

\acmPrice{15.00}


\begin{document}
\title{On Inversely Proportional Hypermutations with Mutation Potential}

\author{Dogan Corus}
\affiliation{
  \institution{Department of Computer Science\\ University of Sheffield}
  \city{Sheffield} 
  \postcode{S1 4DP, UK}
}
\email{d.corus@sheffield.ac.uk}

\author{Pietro S. Oliveto}
\affiliation{
  \institution{Department of Computer Science\\ University of Sheffield}
  \city{Sheffield} 
  \postcode{S1 4DP, UK}
}
\email{p.oliveto@sheffield.ac.uk}

\author{Donya Yazdani}
\affiliation{
  \institution{Department of Computer Science\\ University of Sheffield}
  \city{Sheffield} 
  \postcode{S1 4DP, UK}
}
\email{dyazdani1@sheffield.ac.uk}

\begin{abstract}
Artificial Immune Systems (AIS) employing hypermutations with linear static mutation potential have recently been shown to be very effective at escaping local optima of combinatorial optimisation problems
at the expense of being slower during the exploitation phase compared to standard evolutionary algorithms.
In this paper we prove that considerable speed-ups in the exploitation phase may be achieved 
with dynamic inversely proportional mutation potentials (IPM)
and argue that the potential should decrease inversely to the distance to the optimum rather than to the difference in fitness. Afterwards we define a simple (1+1)~Opt-IA, that uses IPM hypermutations and ageing, for realistic applications where optimal solutions are unknown. 
%
The aim of the AIS is to approximate the ideal behaviour of the inversely proportional hypermutations better and better as the search space is explored.
We prove that such desired behaviour, and related speed-ups, occur for a well-studied bimodal benchmark function called \textsc{TwoMax}.
Furthermore, we prove that the (1+1)~Opt-IA with IPM efficiently optimises a third bimodal function, \textsc{Cliff}, by escaping its local optima while Opt-IA with static potential cannot, thus requires exponential expected runtime in 
the distance between the cliff and the optimum.
\end{abstract}


\maketitle

\section{Introduction}
Artificial Immune Systems (AIS) for optimisation are generally inspired by the clonal selection principle  \cite{Burnet1959}.
For this reason they are also often referred  to as {\it clonal selection} algorithms \cite{Brownlee2007}. 
In the literature two key features of clonal selection algorithms have been identified \cite{DeCastroTimmisBook}: 
\begin{enumerate}
\item  The proliferation rate of each immune cell is proportional to its affinity with the selective antigen: the higher the affinity, 
the higher the number of offspring generated (clonal selection and expansion). 
\item The mutation suffered by each immune cell during reproduction is inversely proportional to the affinity of the cell receptor with the antigen: the higher the affinity, 
the smaller the mutation (affinity maturation).
\end{enumerate}
Indeed well-known clonal selection algorithms employ mutation operators applied to immune cells (i.e., candidate solutions) with a rate that decreases with their similarity to the 
antigen (i.e., global optima) during affinity maturation. Often such operators are referred to as {\it inversely proportional hypermutations} (IPH). 
Popular examples of such clonal selection algorithms are Clonalg \citep{DecastroVonzuben2002}   and Opt-IA \citep{CutelloPavoneTimmis2006}. 

The ideal behaviour of the IPH operator is that the mutation rate is minimal in proximity of the global optimum and should increase as the difference between the fitness of the global optimum
and that of the candidate solution increases. However, achieving such behaviour in practice may be problematic because the fitness of the global optimum is usually unknown.
As a result, in practical applications information about the problem is used to identify bounds on (or estimates of)  the value of the global optimum\footnote{Alternatively, the fitness of the best candidate solution is sometimes used and 
the mutation rate of the rest of the population is inversely proportional to the best.}.
Thus, the closer is the estimate to the actual value of the global optimum, the closer should the behaviour of the IPH operator be to the desired one.
On the other hand,  if the bound is much higher (e.g., for a maximisation problem)  than the true value, then there is a risk that the mutation rate is too high in proximity of the global optimum 
i.e., the algorithm will struggle to identify the optimum. 

Previous theoretical analyses, though, have highlighted various problems with 
IPH operators even when the fitness value of the global optimum is known.
Zarges analysed the effects of mutating candidate solutions with a rate that is inversely proportional to their fitness for the OneMax problem  \cite{Zarges2008}.
She considered two different rates for the decrease of the mutation rate as the fitness increases: a linear decay (i.e., each bit flips with probability $\textsc{OneMax(x)}/\text{Opt}$ where $Opt$ is the optimum value) 
and an exponential decay (i.e., each bit flips with probability $e^{- \rho \frac{\textsc{OneMax(x)}}{\text{Opt}}}$ where $\rho$ is called the {\it decay} parameter).
The motivation behind these choices are that the former operator flips in expectation exactly the number of bits that maximises the probability of reaching the optimum in the next step while the latter is the
operator used in the Clonalg algorithm.  
She showed that if the optimum of OneMax is known, then an algorithm employing such a mutation operator will require exponential time to optimise OneMax with overwhelming probability (w.o.p.)
in both cases of linear or exponential decays of the mutation rate.
The reason is that the initial random solutions that have roughly half the fitness (i.e., $n/2$) of the global optimum (i.e., $n$) have very high mutation rates. Such rates do not allow the algorithm to make any progress with any reasonable probability even if the decay parameter $\rho$ is chosen very carefully (i.e., $\rho = \ln n$ leads to reasonable mutation rates between $1/n$ and $1/2$ for every
possible fitness value of OneMax).
Hence, the desired behaviour of the mutation operator, achieved by assuming the optimum is known, leads to very inefficient algorithms even for the simple OneMax function. 

On the other hand, the Opt-IA clonal selection algorithm, that also uses very high mutation rates (called hypermutations with mutation potential), has been proven to be efficient by employing a selection strategy called {\it stop at first constructive mutation} (FCM). The strategy evaluates fitness after each bit is flipped and interrupts the hypermutation immediately if an improvement is detected.
Indeed, the operator using a static mutation potential (i.e., a linear number of bits are flipped unless an improvement is found along the way) has been proven to be very effective at escaping local
optima of standard multimodal benchmark functions \cite{CorusOlivetoYazdani2017} and at finding arbitrarily good approximations for the Number Partitioning NP-Hard problem \cite{CorusOlivetoYazdani2018} at the expense
of being slower at hillclimbing during exploitation phases (i.e., it is up to a linear factor slower than standard bit mutation  for easy functions such as OneMax and LeadingOnes).
Hence, differently from other clonal selection algorithms (such as Clonalg and Opt-IA without FCM), hypermutations with mutation potential coupled with FCM can cope with the desired behaviour of inversely proportional mutation rates. 

In this paper we consider whether Opt-IA may become faster during exploitation phases if inversely proportional mutation potentials are applied rather than static ones.
The reason to believe this is that the hypermutations waste many fitness function evaluations during exploitation because
as the optimum is approached, the probability of flipping bits that lead to an improvement decreases. 
Hence, with high probability the operator flips wrong bits at the beginning of a hypermutation and ends up wasting a linear number
of fitness function evaluations. On the other hand, if the mutation potential decreases as the optimum is approached, 
then the amount of wasted evaluations should decrease accordingly.

A previous runtime analysis for OneMax of the inversely proportional hypermutations with mutation potential used by Opt-IA has shown that the mutation rate is always in the range $[(c/2)n, cn]$ where $M=cn$ is the highest mutation potential. Hence it does not decrease inversely proportional to the optimum as desired~\cite{JansenZarges2011}. 

In this paper we first show that considerable speed-ups in exploitation phases may be achieved, compared to static hypermutations, 
if mutation rates decrease appropriately with either the fitness or the distance to the optimum. 
To show this we analyse the IPH operators for standard unimodal benchmark functions where the performance of static mutation potentials is well-understood~\cite{CorusOlivetoYazdani2017}.
Through the analysis we show what speed-ups may be hoped for by analysing the operators in the ideal situation where the location of the optimum is known.
A result of this first analysis is that a mutation potential that increases exponentially with the Hamming distance to the optimum is the most promising out of three considered 
inverse potentials since it provides the larger speed-ups. Furthermore, using the Hamming distance rather than fitness as measure to quantify proximity to the optimum
makes the operator robust to fitness function scaling.
Hence we consider this operator, called  \expoHD, in the rest of the paper.

Afterwards we propose a clonal selection algorithm that we call (1+1)~Opt-IA~\footnote{The well-known Opt-IA AIS uses both operators in combination~\citep{CutelloPavoneTimmis2006}.}
that uses IPH and ageing to be applied in practical applications where the optimum is not known. 
The algorithm uses the best solution it has encountered to estimate the mutation rates.
In the literature it has been shown that ageing allows algorithms to escape from local optima  by identifying a new slope of increasing fitness or to completely restart the optimisation process if it cannot escape~\cite{CorusOlivetoYazdani2017,CorusOlivetoYazdani2018,HorobaJansenZarges09,JansenZarges2011c}. 
The idea is that the more of the search space that is explored, the better the ideal behaviour of the IPH is approximated through the discovery of  better and better local optima. 

Our analysis reveals that such a strategy does not produce the desired effect.
Since the mutation rate decreases with the distance to the best found local optimum the algorithm may encounter difficulties at identifying new promising optima. 
In particular, if the algorithm identifies some slope that leads away from the previous local optimum, then the mutation rate will increase as the new optimum is approached.
Firstly, this makes the new optimum hard to identify. Secondly, the high mutation rates in its proximity lead to high wastage of fitness function evaluations defeating our main motivation of reducing
such wastage compared to static mutation potentials. We rigorously prove this effect for the well-studied \textsc{TwoMax} bimodal benchmark function where the expected runtime does not improve compared to the runtime of static hypermutations to optimise each slope of the function (\emph{i.e.}, $\Theta(n^2 \log{n})$). On the other hand we use the \textsc{Cliff} benchmark function to show that the IPH can escape local optima when coupled with ageing. Static hypermutations cannot optimise the function efficiently because, once the local optimum is escaped the high mutation rates force the algorithm to jump back to the local optima.

To this end we define a {\it Symmetric} IPH operator that decreases the potential with respect to the distance to the best local optimum and uses the same rate of decrease in all other directions.
We prove the effectiveness of our strategy, and subsequent speed-ups over static hypermutations, for the \textsc{TwoMax} benchmark function while showing that the local optima of 
 \textsc{Cliff} can still be escaped by the {\it Symmetric} IPH operator.




\section{Preliminaries}
The original IPH potential proposed for Opt-IA was  $M=\lceil(1-f_{OPT}/v)\rceil cn$ for minimisation problems, where $f_{OPT}$ is the best known fitness and $v$ is the fitness of the individual. 
In an analysis for \textsc {OneMax}, such mutation potential was shown not to decrease below $cn/2$, with $cn$ being the highest possible mutation potential~\cite{JansenZarges2011}.
In this section we introduce three different IPH operators  that will be analysed in this paper. Two have been already considered in the literature 
while the third one is newly proposed by us based on the performance of the first two.




\subsection{Hamming Distance Based Linear Decrease}
 Zarges analysed an IPH operator where the probability of flipping each bit increased linearly with the Hamming distance to the optimum (or the best available estimate of the optimum)~\cite{Zarges2008}.
Precisely, this mutation operator flips each bit with probability $\min\{H(x,best)\}/n$ where $n$ is the size of problem and $\min\{H(x,best)\}$ is the minimum Hamming distance of the current point to a best individual. Here, we consider the mutation potential version of such operator. As the expected number of bit-flips is $\min\{H(x,best)\}$ during each execution of the mutation operator, we choose this value for the mutation potential:
\begin{align}
M_{\text{linHD}}(x)=\min\{H(x,best)\}.
\end{align}

\subsection{Fitness Difference Based Exponential Decrease}
In Clonalg's  
IPH 
operator 
the mutation rate decreases as an exponential function of the fitness of the current solution~\cite{DecastroVonzuben2002}. 
Precisely, each bit 
flips with probability $e^{-\rho \cdot v}$ where $v$ is the normalised fitness value and $\rho$ is a decay parameter that regulates the speed at which the mutation rate decreases.
Since we consider only maximisation problems, we use $v=\frac{f(x)}{f(best)}$ as suggested by \cite{Zarges2008} where $best$ is the best known fitness value. Using this mutation operator as a mutation potential gives $M=n \cdot e^{-\rho \cdot v}$. According to both practical and theoretical results in
\cite{Zarges2008}, a reasonable value for $\rho$ is $\ln n$. We call this mutation potential~\expoF and define it as
\begin{align} 
M_{\text{expoF(x)}}(x)=n^{1-\frac{f(x)}{f(best)}}.
\end{align}
\subsection{Hamming Distance Based Exponential Decrease}
Since it is well understood that using differences in fitness values makes randomised search heuristics unstable towards the scaling of fitness functions~\cite{OlivetoWitt2014,OlivetoWitt2015,Whitley1989},  
we also consider a mutation potential which is similar to \expoF with the exception that it uses the normalised Hamming distance to the best estimate rather than the normalised fitness. We call this mutation potential \expoHD and define it as \expoHD$=n \cdot e^{-\rho \frac{n-H(x,best)}{n}}$ where $n$ is the maximum Hamming distance from any search point to the optimum (which is always at most $n$), $H(x,best)$ is the Hamming distance to the best known individual and $\rho$ is the decay of the mutation potential. For the choice of $\rho=\ln n$, we get, 
\begin{align}
M_{\text{expoHD}}(x)=n^{(1-\frac{n-H(x,opt)}{n})}=n^{\frac{H(x,best)}{n}}.
\end{align}

\section{Ideal Behaviour Analysis}\label{sec:OptKnown} 

\begin{table*}[t!]
\caption{Comparison of the results obtained by different hypermutation schemes for \textsc{OneMax} and \textsc{LeadingOnes}. The same result as $(1+1)~IA^{hype}$ using static hypermutation ($M=n$) has been shown for the original mutation potential of Opt-IA \cite{JansenZarges2011}.}

\begin{center}
\begin{tabular}{ |l l l| }
 \hline
 Algorithms & \textsc{OneMax} &  \textsc{LeadingOnes} 
\T\B \\ \hline
$(1+1)~IA^{Hyp}$ using \linHD & $\Theta(n^2)$ & $\Theta(n^3)$
\T\B\\ 
  $(1+1)~IA^{Hyp}$ using \expoF & $O(n^{(3/2)+\epsilon}\log n)$, $\Omega(n^{3/2-2\epsilon})$ & $O(n^3/\log n)$, $\Omega(n^{5/2+\epsilon})$
 \T\B \\
$(1+1)~IA^{Hyp}$using \expoHD & $O(n^{(3/2)+\epsilon}\log n)$, $\Omega(n^{3/2-2\epsilon})$ & $O(n^{\frac{5/2+\epsilon}{\ln n}})$, $\Omega(n^{9/4-\epsilon})$
 \T\B \\
$(1+1)~IA^{hype}$ with $M=n$\cite{CorusOlivetoYazdani2017}  & $\Theta(n^2\log n)$ \cite{CorusOlivetoYazdani2017} & $\Theta(n^3)$\cite{CorusOlivetoYazdani2017}
\T\B\\
\hline
 \end{tabular}
\end{center}
\label{table:afl}
\end{table*}

 \begin{figure}[t!]
 \centering
  \includegraphics[width=0.3\textwidth]{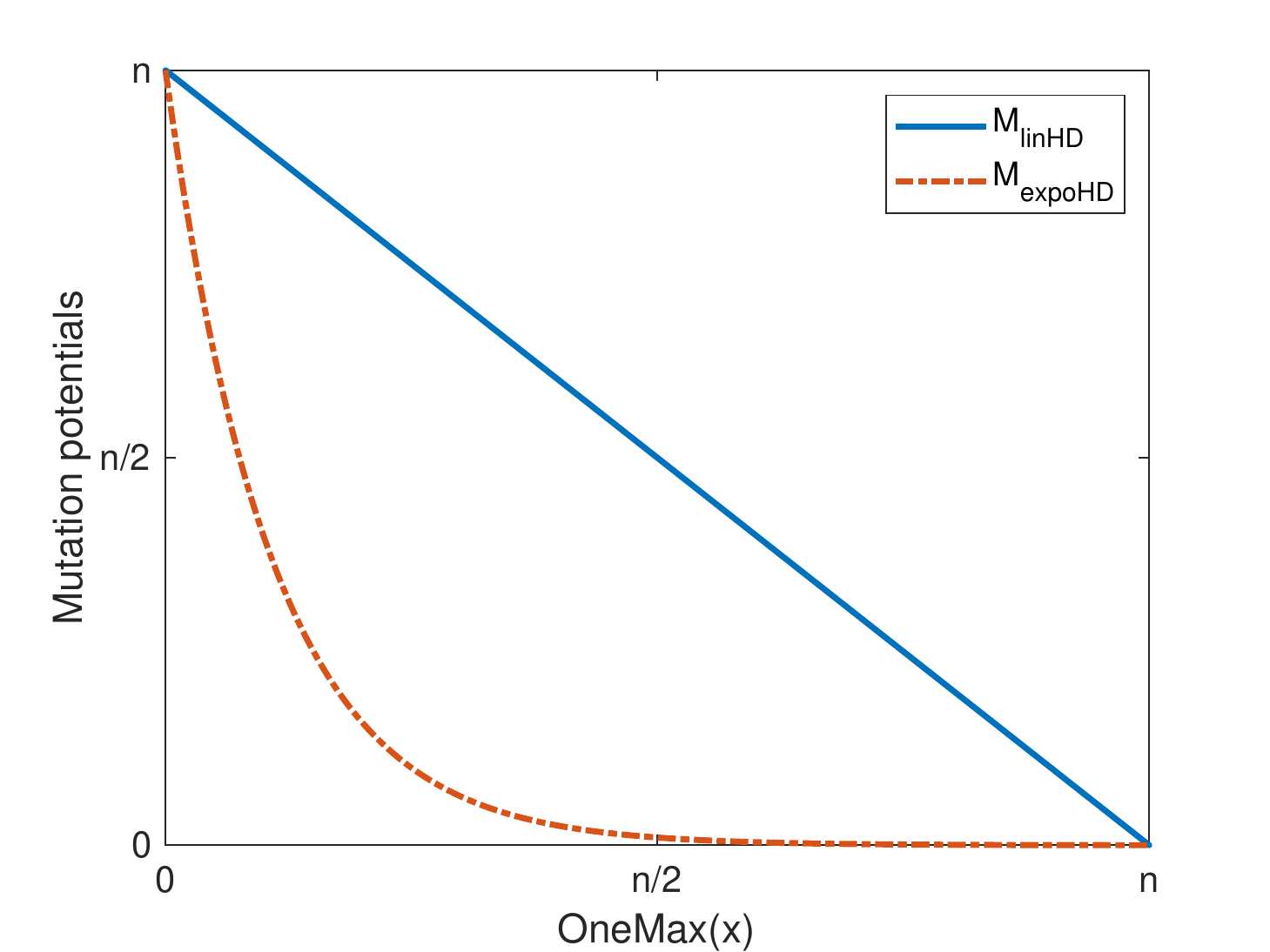}
 \caption{\textsc{Mutation potentials with respect to the distance to the optimum of \textsc{OneMax}. Since fitness  values and Hamming distance are the same for \textsc{OneMax}, we only plot one exponentially decaying curve.}}
 \label{fig:onemax}
 \end{figure}
 
   \begin{figure}[t!]
 \centering
  \includegraphics[width=0.3\textwidth]{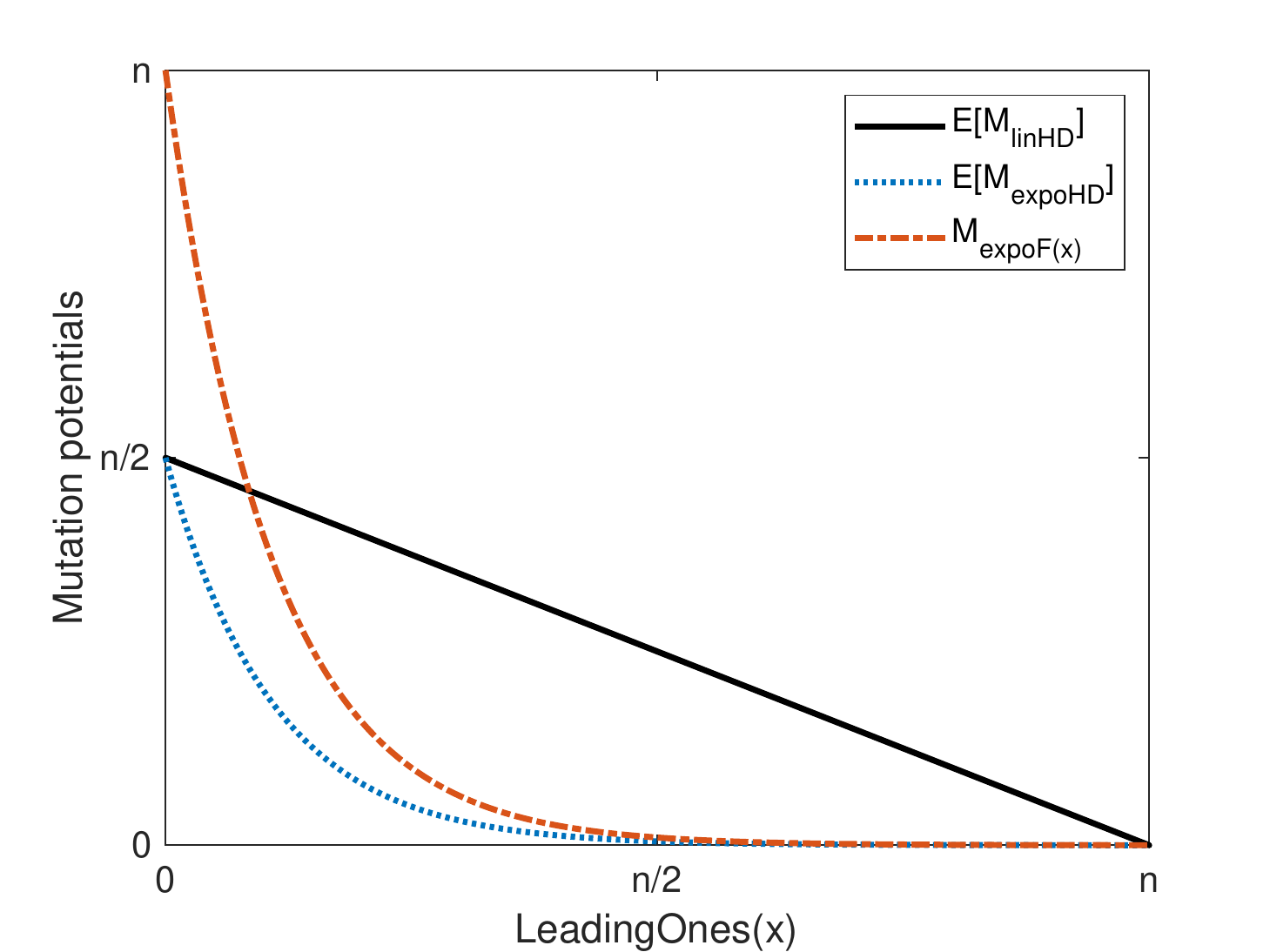}
 \caption{\textsc{Mutation potentials for \textsc{LeadingOnes} }. Expected values are provided for IPHs that use Hamming distance.}
 \label{fig:leadingones}
 \end{figure}

In this section we evaluate the performance of the different inversely proportional mutation potentials, 
assuming the optimum is known (i.e., $best=opt$). Under this assumption, the operators exhibit their ideal behaviour (i.e., the mutation potential decreases with the desired rate as the optimum is approached).
Our aim is to evaluate what speed-ups can be achieved in ideal conditions compared to the well-studied static mutation potentials. To achieve such comparisons we perform runtime analyses of the (1+1)~IA$^{hyp}$ using the IPHs on the \textsc{OneMax} and \textsc{LeadingOnes} unimodal benchmark functions for which the performance of the same algorithm using static mutation potentials is known~\cite{CorusOlivetoYazdani2017}. The simple to define \textsc{OneMax} function counts the number of 1-bits in the bit string and is normally used to show the hill climbing ability of the algorithm. On the other hand, \textsc{LeadingOnes}, is a more complicated unimodal problem which counts the consecutive number of 1-bits at the beginning of the bit string before the first 0-bit. 

The pseudo-code of the algorithm, \oneoneiahype, is given in Algorithm~\ref{alg:simple}. It simply uses one candidate solution in each iteration to which inversely proportional hypermutations are applied.
The results proven in this section and comparisons with static hypermutations are summarised in Table~1.
%
%
As previously mentioned, {\it Constructive mutation} in Algorithm \ref{alg:simple} is a mutation step where the evaluated bit string is at least as fit as its parent. 

Figures \ref{fig:onemax} and \ref{fig:leadingones} show how the studied mutation potentials decrease during the run of the algorithm when optimising \textsc{OneMax} and \textsc{LeadingOnes}, respectively. 


\begin{algorithm}[t] 
    \caption{\oneoneIA}
    \begin{algorithmic}[1] 
            \STATE{Initialise $x \in\{0,1\}^n$ uniformly at random;}
            \STATE Evaluate $f(x)$;      
            \WHILE{termination condition is not satisfied} 
            \STATE{ $M = IPM(x) $};
            \STATE{Create $y$ by flipping at most $M$ distinct bits of $x$ selected uniformly at random one after another until a \textit{constructive mutation} happens;};
			\IF {$f(y) \geq f(x) $}
			\STATE{$x:=y$}.
			\ENDIF
            \ENDWHILE
    \end{algorithmic} \label{alg:simple}
\end{algorithm}

\subsection{\textsc{OneMax}}
The following three theorems derive the expected runtimes of the three variants of IPM for the function $\textsc{OneMax}(x):=\sum_{i=1}^{n}x_i$. By decreasing the potential linearly with the decrease of the Hamming distance, a logarithmic factor may be shaved off from the expected runtime of the \oneoneIA compared to the expected runtime if static hypermutation potentials were used.

The following lemma, called Ballot theorem, is used throughout this paper to derive lower bounds on the runtime of the algorithms for \textsc{OneMax}$(x):=\sum_{i=1}^n x_i$.  It was first used to analyse hypermutations with mutation potential for the same function by Jansen and Zarges \cite{JansenZarges2011}. This theorem is essentially used to show the probability of picking more 1-bits than 0-bits during one hypermuation operation given that there are more 1-bits in the initial bit-string (or vice-versa). 

\begin{lemma}[Ballot Theorem~\cite{feller1968}] \label{thm:ballot} 
In a ballot, suppose that candidate P receives $p$ votes and candidate Q receives $q$ votes, such that $p>q$. The probability that throughout the counting P is always ahead of Q is $(p-q)/(p+q)$. 
\end{lemma}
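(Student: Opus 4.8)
The plan is to prove the Ballot Theorem by the classical reflection argument (Andr\'e's reflection principle). I model the count as a uniformly random ordering of the $p+q$ votes, equivalently a uniformly random sequence $s\in\{+1,-1\}^{p+q}$ with exactly $p$ entries $+1$ (a vote for P) and $q$ entries $-1$; there are $\binom{p+q}{p}$ such sequences, each equally likely. Writing $S_k=s_1+\cdots+s_k$ for the lead of P after $k$ votes, the event of interest is $\{S_k>0\text{ for all }1\le k\le p+q\}$, so it suffices to count the sequences in this event and divide by $\binom{p+q}{p}$.

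First I would observe that any such sequence must start with $s_1=+1$. Conditioning on this, the remaining $p+q-1$ steps form a lattice path starting at height $1$, with $p-1$ up-steps and $q$ down-steps, ending at height $p-q$, and the constraint becomes that this path never touches $0$. I count the "good tails" as (all paths from $1$ to $p-q$ of that length) minus (those that touch $0$). The first quantity is $\binom{p+q-1}{p-1}$. For the second, reflecting the initial segment up to the first visit to $0$ across the level $0$ gives a bijection between the paths from $1$ to $p-q$ that touch $0$ and \emph{all} paths from $-1$ to $p-q$ of the same length (every such path must cross $0$ since $p-q\ge 1$); a path from $-1$ to $p-q$ over $p+q-1$ steps has $p$ up-steps and $q-1$ down-steps, so there are $\binom{p+q-1}{p}$ of them. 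Hence the number of good sequences is $\binom{p+q-1}{p-1}-\binom{p+q-1}{p}$, and dividing by $\binom{p+q}{p}$ the two terms reduce by a short factorial cancellation to $\frac{p}{p+q}$ and $\frac{q}{p+q}$, giving $\frac{p-q}{p+q}$.

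The only real care needed is in the reflection step: one must check that the reflected object is exactly the set of \emph{all} paths from $-1$ to the common endpoint (both injectivity and surjectivity follow from the rule "reflect everything strictly before the first zero, leave the rest unchanged"), and that the step-count arithmetic fixing the binomial coefficients is done correctly. Everything beyond that is routine.

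As an alternative, I would record the one-line inductive proof: conditioning on the \emph{last} vote yields the recurrence $P(p,q)=\frac{p}{p+q}P(p-1,q)+\frac{q}{p+q}P(p,q-1)$ with base case $P(p,0)=1$, and substituting the claimed value $\frac{p-q}{p+q}$ closes the induction using the identity $p(p-q-1)+q(p-q+1)=(p-q)(p+q-1)$. The boundary cases — such as $p=q+1$, where one subproblem has a tied tally and hence probability $0$ (consistent with $\frac{p-q}{p+q}=0$ there, since $S_{p+q}=p-q\le 0$) — require no separate treatment.
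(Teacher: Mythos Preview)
Your two arguments are both correct and are the standard textbook proofs of the Ballot Theorem. The reflection count $\binom{p+q-1}{p-1}-\binom{p+q-1}{p}$ and its reduction to $\tfrac{p}{p+q}-\tfrac{q}{p+q}$ are clean, and the inductive verification via $p(p-q-1)+q(p-q+1)=(p-q)(p+q-1)$ closes properly once one notes that the formula $(p'-q')/(p'+q')$ extends to $0$ at $p'=q'$ (your parenthetical remark is slightly garbled in wording but correct in content: the ``there'' refers to the tied subproblem, not to the original $p=q+1$ case).

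There is nothing to compare against in the paper itself: the authors do not prove Lemma~\ref{thm:ballot} but merely quote it from Feller~\cite{feller1968} as a black box to be applied in the subsequent runtime bounds. Your write-up therefore goes well beyond what the paper does, supplying a self-contained proof where the paper is content to cite the classical source.
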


\begin{theorem} \label{th:linHD-OM}
The \oneoneIA using \linHD optimises \textsc{OneMax} in $\Theta(n^2)$ expected fitness function evaluations.
\end{theorem}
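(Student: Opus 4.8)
The plan is to exploit the fact that, with the optimum known ($best=opt$), a search point at Hamming distance $k$ from $opt$ has exactly $k$ zero-bits, fitness $n-k$, and mutation potential $M_{\text{linHD}}(x)=k$. The first step is a structural observation about a single hypermutation from such a point: if after $j$ of its bit-flips exactly $z_j$ zero-bits and $o_j=j-z_j$ one-bits have been flipped, the current fitness is $n-k+z_j-o_j$, so a constructive mutation occurs precisely when $z_j\ge o_j$. Since $z_j$ and $o_j$ increase by at most one per flip and start at $0$, if the first flipped bit is a $0$ then $z_1=1>0=o_1$ and the hypermutation stops after one evaluation with a strictly improved offspring (distance $k-1$); if the first flipped bit is a $1$ then $o_j>z_j$ initially, so the first index with $z_j\ge o_j$ must satisfy $z_j=o_j$, i.e. the mutation is at best neutral. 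Hence (i) a strict improvement occurs in a hypermutation exactly when the first flipped bit is a zero, which has probability $k/n$; (ii) each strict improvement costs exactly one evaluation and lowers the distance by exactly one; (iii) the distance to $opt$ never increases during the run.

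For the upper bound, (iii) and (ii) imply that from an initial distance $k_0\le n$ the algorithm needs exactly $k_0$ improving hypermutations and passes through every intermediate distance. Let $N_k$ be the number of hypermutations executed while at distance $k$. By (i) and the symmetry of \textsc{OneMax} (the improvement probability $k/n$ does not depend on \emph{which} point at distance $k$ is occupied, so neutral moves do not change it), $N_k$ is geometrically distributed with parameter $k/n$, hence $\mathbb{E}[N_k]=n/k$. The final (improving) hypermutation at distance $k$ costs one evaluation, whereas each of the other $N_k-1$ performs at most $M=k$ bit-flips and hence at most $k$ evaluations; therefore the expected number of evaluations spent at distance $k$ is at most $1+(\mathbb{E}[N_k]-1)k=1+n-k\le n$. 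Summing over $k=1,\dots,n$ gives the $O(n^2)$ bound.

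For the matching lower bound I would argue that a constant fraction of the distance levels are both visited and individually expensive. By a Chernoff bound the initial point has distance $k_0\ge n/4$ with probability $1-2^{-\Omega(n)}$, so by (ii)--(iii) every level $k\in\{n/8,\dots,n/4\}$ is visited with probability $1-o(1)$. Fix such a level $k\le n/4$ and consider the first hypermutation performed there: identify its at most $k$ bit-flips with the prefix of a uniformly random permutation of all $n$ bits, counting a one-bit flip as a vote for $P$ and a zero-bit flip as a vote for $Q$, so that $P$ receives $n-k$ votes and $Q$ receives $k$. By Lemma~\ref{thm:ballot}, with probability $\frac{(n-k)-k}{n}=\frac{n-2k}{n}\ge\frac12$, $P$ stays strictly ahead throughout, i.e. $o_j>z_j$ for every prefix of length $j\le k$; when this happens no constructive mutation ever occurs and the hypermutation runs for the full $k$ flips. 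Hence the expected cost of the first hypermutation at level $k$ is at least $(1-o(1))\cdot\frac{k}{2}$, and summing over the $\Theta(n)$ levels $k\in\{n/8,\dots,n/4\}$ yields $\Omega(n^2)$ expected evaluations, which together with the upper bound establishes the $\Theta(n^2)$ claim.

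The routine parts are the Chernoff estimate on the initial distance and the geometric-distribution bookkeeping; the step needing the most care is the lower bound, where one must correctly cast the without-replacement sequence of bit-flips as a ballot-counting process so that Lemma~\ref{thm:ballot} applies verbatim, and verify that neutral moves, which alter the current string without changing its distance, neither break the per-level geometric structure nor create strict improvements.
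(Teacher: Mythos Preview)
Your proof is correct and follows essentially the same route as the paper: a level-based sum for the upper bound, a Chernoff bound on the initial distance, and the Ballot theorem (Lemma~\ref{thm:ballot}) for the lower bound. The paper applies Ballot to upper-bound the improvement probability by $2i/n$ and then multiplies the expected waiting time by the wasted cost $i$, whereas you apply Ballot directly to lower-bound the probability that the \emph{first} hypermutation at each level runs its full $k$ steps; both give $\Omega(n^2)$ from $\Theta(n)$ levels contributing $\Theta(n)$ each.

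Your structural analysis is in fact tighter than the paper's in one respect: by distinguishing strict improvements (first flip hits a zero) from neutral constructive mutations (first flip hits a one, FCM later stops at $z_j=o_j$), you justify both that the distance decreases by exactly one per improvement and that neutral moves preserve the per-level improvement probability. The paper's lower-bound phrasing ``each time the mutation operator fails to improve the fitness, $i$ fitness function evaluations will be wasted'' glosses over neutral moves that stop early; your argument sidesteps this by only charging the first hypermutation at each visited level, which is cleaner.
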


\begin{proof}
Considering $i$ as the number of 0-bits in the candidate solution, the probability of improvement in the first step is $i/n$. Knowing that at most $n$ improvements are needed to find the optimum and in case of failure $H(x,opt)=i$ fitness function evaluations will be wasted, the total expected time to optimise \textsc{OneMax} is at most $\sum_{i=1}^n \frac{n}{i} \cdot i =O(n^2)$.

In order to prove a lower bound, we use Ballot theorem \cite{feller1968} which is stated in Lemma \ref{thm:ballot}. By Chernoff bounds the number of 0-bits in the initialised solution is at least $n/3$ w.o.p. Considering the number of 
0-bits as $i=q$ and the number of 1-bits as $n-i=p$, the probability of 
an improvement is at most $1-(p-q)/(p+q)=1-(n-2i)/n=2i/n$ by Lemma \ref{thm:ballot} where $i=H(x, opt)$. This means that we need to wait at least $n/(2i)$ iterations to see an improvement and each time the mutation operator fails to improve the fitness, $i$ fitness function evaluations will be wasted. Considering that at least $n/3$ improvements are needed, the expected time to optimise \textsc{OneMax} is larger than $\sum_{i=1}^{n/3} \frac{n}{2i} \cdot i= \Omega(n^2)$.
\end{proof}

The following theorem shows that a greater speed up may be achieved if the potential decreases exponentially rather than linearly. Its proof  deviates from the proof of Theorem 3.2 only in the amount of wasted evaluations. Note that for \textsc{OneMax} the Hamming distance of a solution to the optimum and its difference in fitness are the same. Hence, the subsequent corollary is obvious.

\begin{theorem} \label{th:expoF(x)-UPonOM}
The \oneoneIA using \expoF optimises \textsc{OneMax} in $O(n^{3/2+\epsilon} \log 
n)$ and $\Omega(n^{3/2-\epsilon})$ expected fitness function evaluations for 
any arbitrarily small constant $\epsilon>0$.
\end{theorem}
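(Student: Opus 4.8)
The plan is to follow the proof template of Theorem~\ref{th:linHD-OM}, the only substantive change being in how many fitness evaluations an unsuccessful hypermutation wastes. Since $best=opt$ we have $f(best)=n$, so at a search point $x$ with $i:=H(x,opt)$ zero-bits the potential \expoF equals $M=n^{1-f(x)/n}=n^{i/n}$. As noted in the proof of Theorem~\ref{th:linHD-OM}, on \textsc{OneMax} an iteration at distance $i$ improves the fitness with probability $i/n$ --- namely exactly when its first flip hits a $0$-bit --- and then $i$ decreases by one, while every other accepted offspring has the same number of zero-bits. Hence $i$ is non-increasing, the expected number of iterations spent at distance $i$ is $n/i$, and, since each successful iteration costs a single evaluation, the total expected runtime is at most $\sum_{i=1}^{n}(n/i)\,W_i+O(n)$, where $W_i$ is the expected number of evaluations wasted in one unsuccessful iteration started at distance $i$. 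Everything thus reduces to estimating $W_i$.

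In an unsuccessful iteration started at distance $i$ the first flip hits a $1$-bit, so afterwards the \emph{surplus} $S$ (flipped $0$-bits minus flipped $1$-bits) equals $-1$; the operator keeps flipping distinct bits until $S$ first returns to $0$ (the first constructive mutation) or the budget $M=n^{i/n}$ is exhausted, and $W_i$ is the expectation of this stopping time, so trivially $W_i\le n^{i/n}$. Each further flip raises $S$ with probability equal to the current fraction of $0$-bits among the unflipped bits, which stays within $o(1)$ of $i/n$ while only $o(n)$ flips have occurred, so $S$ behaves like a random walk on $\mathbb{Z}$ with drift $\approx(2i-n)/n$. I would then split the distances at $(1/2+\epsilon)n$. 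For $i\le(1/2+\epsilon)n$ I use only $W_i\le n^{i/n}\le n^{1/2+\epsilon}$, contributing $\sum_{i\le(1/2+\epsilon)n}(n/i)\,n^{1/2+\epsilon}=O(n^{3/2+\epsilon}\log n)$ via $\sum_i 1/i=O(\log n)$. For $i>(1/2+\epsilon)n$ the walk $S$ has positive drift bounded below by the constant $2\epsilon$, so a standard gambler's-ruin estimate gives that it returns from $-1$ to $0$ in $O(1/\epsilon)=O(1)$ expected steps (the bit composition shifts by only $O(1/\epsilon)/n=o(1)$ over these steps, so the up-probability stays above $1/2$ throughout), whence $W_i=O(1)$ and $\sum_{i>(1/2+\epsilon)n}(n/i)\,O(1)=O(n)$. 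Together with the $O(n)$ from successful iterations this gives the claimed $O(n^{3/2+\epsilon}\log n)$ bound.

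For the lower bound I would use the Ballot theorem (Lemma~\ref{thm:ballot}) as in Theorem~\ref{th:linHD-OM}. By a Chernoff bound the initialised point has more than $(1/2-\epsilon)n$ zero-bits w.o.p., and since $i$ never increases the run w.o.p.\ visits every distance $i\in[n/4,(1/2-\epsilon)n]$ and performs at least one iteration at each. There, Lemma~\ref{thm:ballot} with $p=n-i$ ones and $q=i$ zeros bounds the improvement probability of one iteration by $2i/n\le 1-2\epsilon$, so with probability $\Omega(1)$ the first flip hits a $1$-bit; the surplus walk then has drift at most $-\epsilon$ over its at most $M=n^{i/n}\le n^{1/2}$ flips, and since the probability that a walk of drift $\le-\epsilon$ ever climbs from $-1$ to $0$ is bounded away from $1$, with probability $\Omega(1)$ it never does before the budget is exhausted, wasting all $n^{i/n}$ evaluations. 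Hence the expected waste at distance $i$ is $\Omega(n^{i/n})$, and summing the geometric series the expected runtime is $\Omega(\sum_{i=n/4}^{(1/2-\epsilon)n}n^{i/n})=\Omega(n^{1/2-\epsilon}\cdot n/\ln n)=\Omega(n^{3/2-2\epsilon})$, i.e.\ $\Omega(n^{3/2-\epsilon})$ after relabelling $\epsilon$.

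The main obstacle is controlling the surplus walk $S$: one must verify that, as distinct bits are removed from the pool one after another, its step probabilities remain close enough to $i/n$ for the gambler's-ruin estimates to hold both ways --- ``$S$ returns to $0$ in $O(1)$ steps'' in the large-$i$ regime of the upper bound, and ``$S$ stays below $0$ for the whole budget with constant probability'' in the $i\le(1/2-\epsilon)n$ regime of the lower bound. The thresholds $(1/2\pm\epsilon)n$ are chosen precisely so that in the relevant ranges $M=n^{i/n}$ is either at most $n^{1/2+\epsilon}$ or is $o(n)$, which validates these approximations and is the source of the $n^{\pm\epsilon}$ in the statement.
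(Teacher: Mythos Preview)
Your proof is correct and follows the same fitness-level template as the paper: at distance $i$ each failed iteration wastes at most $n^{i/n}$ evaluations, and one sums $(n/i)\cdot n^{i/n}$ over the relevant range. The paper's argument is considerably shorter, though, because it avoids the random-walk machinery you introduce.

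For the upper bound the paper simply invokes a Chernoff bound to ensure the initial distance satisfies $i\le n/2+\epsilon n$ w.o.p., so that $n^{i/n}\le n^{1/2+\epsilon}$ holds throughout the run and the sum $\sum_{i\le n/2+\epsilon n}(n/i)\,n^{i/n}$ gives the bound directly. Your gambler's-ruin analysis of the regime $i>(1/2+\epsilon)n$ is therefore unnecessary (and is the part of your write-up that would need the most care to make fully rigorous, since the step probabilities of the surplus walk change as bits are removed). For the lower bound, the Ballot theorem by itself already yields that with probability $(n-2i)/n\ge 2\epsilon$ the flipped $1$-bits stay ahead of the flipped $0$-bits throughout the \emph{entire} permutation, so a fortiori no constructive mutation occurs in the first $M$ steps and the full budget $n^{i/n}$ is wasted; your two-stage argument (first flip hits a $1$-bit, then a negative-drift walk fails to return) rederives this but is redundant. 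The paper then just sums $\Omega(n^{1/2-\epsilon})$ over $\Theta(n)$ levels near $i\approx n/2-\epsilon n$, rather than evaluating the geometric series, obtaining $\Omega(n^{3/2-\epsilon})$ without the relabelling of~$\epsilon$.
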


\begin{corollary}\label{cor:omexpohd}
The \oneoneIA using \expoHD optimises \textsc{OneMax} in $O(n^{3/2+\epsilon} 
\log 
n)$ and $\Omega(n^{3/2-\epsilon})$ expected fitness function evaluations for 
any arbitrarily small constant $\epsilon>0$.

\end{corollary}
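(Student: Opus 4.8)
The plan is to deduce the corollary immediately from Theorem~\ref{th:expoF(x)-UPonOM} by observing that, on \textsc{OneMax}, the potentials \expoF and \expoHD are literally the same function of the current search point, so the \oneoneIA runs identically under either of them.

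First I would record the ideal-behaviour convention of this section, namely $best=opt=1^n$, hence $f(best)=n$. For an arbitrary $x\in\{0,1\}^n$ we have $\textsc{OneMax}(x)=\sum_{i=1}^n x_i = n-H(x,opt)$, since the number of $1$-bits equals $n$ minus the number of positions in which $x$ differs from $1^n$. Substituting this into the definition of \expoF yields
\begin{align*}
M_{\text{expoF}}(x)=n^{\,1-\frac{f(x)}{f(best)}}=n^{\,1-\frac{n-H(x,opt)}{n}}=n^{\,\frac{H(x,opt)}{n}}=M_{\text{expoHD}}(x),
\end{align*}
so the two potentials agree pointwise (and therefore also agree after the rounding to an integer number of bit flips that Algorithm~\ref{alg:simple} applies, since that rounding is the same in both cases).

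Since $IPM(x)$ is the only step in which the two variants of the algorithm differ, a trivial coupling argument then finishes the proof: the \oneoneIA using \expoHD and the \oneoneIA using \expoF start from the same initial distribution and, in every iteration, compute the same value of $M$ and flip the same uniformly chosen sequence of bits, stopping at the first constructive mutation in both runs. Hence the number of fitness evaluations until the optimum is found has the same distribution in both cases, and in particular the same expectation, so the bounds $O(n^{3/2+\epsilon}\log n)$ and $\Omega(n^{3/2-\epsilon})$ carry over verbatim from Theorem~\ref{th:expoF(x)-UPonOM}.

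There is no real obstacle in this argument; the only point worth stressing is that the identity $f(x)=n-H(x,opt)$ holds only because \textsc{OneMax} is (an affine function of) the Hamming distance to the optimum, which is precisely why the corollary is restricted to \textsc{OneMax} and does not transfer to \textsc{LeadingOnes}, where \expoF and \expoHD genuinely differ and are analysed independently in Table~\ref{table:afl}.
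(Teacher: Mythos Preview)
Your proposal is correct and follows exactly the approach the paper intends: the corollary is stated as obvious precisely because, on \textsc{OneMax}, the Hamming distance to the optimum equals the fitness difference, so \expoF and \expoHD coincide and Theorem~\ref{th:expoF(x)-UPonOM} applies verbatim. You have simply spelled out in detail what the paper leaves implicit.
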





\subsection{\textsc{LeadingOnes}}
In the previous section it was shown that decreasing the mutation rate linearly with the Hamming distance to the optimum gave a small improvement for \textsc{OneMax}. The following theorem shows that no improvement over static mutation potentials are achieved for \textsc{LeadingOnes}$:= \sum_{i=1}^{n}\prod_{j=1}^{i}x_i$. The main reason for the lack of asymptotic improvement is that a linear mutation potential is sustained until at least a linear number of improvements are achieved.

\begin{theorem} \label{th:linHD-LO}
The \oneoneIA using mutation potential \linHD optimises \textsc{LeadingOnes} in $\Theta(n^3)$ expected fitness function evaluations.
\end{theorem}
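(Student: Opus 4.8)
The plan is to mirror the structure of the proof of Theorem~\ref{th:linHD-OM}, but accounting for the different structure of \textsc{LeadingOnes}. Recall that for \textsc{LeadingOnes} a fitness improvement occurs precisely when the first $0$-bit (the bit at position $i+1$ when the current fitness is $i$) is flipped to a $1$ \emph{and}, crucially, none of the leading $i$ one-bits are flipped during the hypermutation before a constructive mutation is registered. I would first argue the upper bound $O(n^3)$. With \linHD the mutation potential is $M = H(x,opt)$, and since \textsc{LeadingOnes} has exactly one point at Hamming distance $0$ (the optimum) while every non-optimal point with fitness $i$ has at least $n-i$ incorrect bits among the last $n-i$ positions (the first of which, position $i+1$, is a $0$), the potential stays $\Omega(n)$ until $\Omega(n)$ improvements have been made; more carefully, I would just use $M \le n$ as an upper bound on wasted evaluations per failed hypermutation. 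The probability that a single hypermutation step is constructive is $\Omega(1/n)$: it suffices to flip the leftmost $0$-bit first, which happens with probability $\ge 1/M \ge 1/n$ (and once that bit is flipped the step is automatically constructive, so the FCM rule stops the mutation there and the leading ones are untouched). Summing the expected waiting time $O(n)$ times $O(n)$ wasted evaluations per improvement, over the at most $n$ fitness levels, gives $O(n^3)$.

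For the lower bound $\Omega(n^3)$ I would use a fitness-level / drift argument. After random initialisation the fitness is $O(1)$ in expectation and, say, at most $n/3$ w.o.p.\ by a Chernoff bound on the prefix, so the algorithm must traverse at least $n/3$ fitness levels. I claim that while the fitness $i$ is in the range $[n/3, 2n/3]$ (which must all be passed), the Hamming distance to the optimum is $\Theta(n)$ w.o.p.: the last $n-i = \Theta(n)$ bits are essentially untouched by selection and hence, conditioning appropriately, contain $\Theta(n)$ zero-bits w.o.p. Therefore $M = \Theta(n)$ throughout this phase, so each failed hypermutation wastes $\Theta(n)$ evaluations. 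The probability of a constructive step when the fitness is $i$ is $O(1/n)$: a constructive mutation requires flipping position $i+1$ (probability $\le M/n \le 1$, but more usefully, the leftmost $0$ is flipped among the first $j$ bit-flips only with the right ordering) \emph{without first flipping any of the $i \ge n/3$ leading ones. I would formalise this by noting that among the $M$ bits chosen one-by-one, the event "position $i+1$ precedes all of the $i$ leading-one positions in the chosen order" has probability $1/(i+1) = O(1/n)$, and this is a necessary condition for a constructive step at level $i$ to leave the fitness non-decreasing — any constructive mutation that does not immediately flip $i+1$ would have had to flip a leading one first, which is non-constructive and not a stopping point, eventually either flipping $i+1$ later or exhausting $M$. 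Making this precise (the event that the first constructive mutation is reached, conditioned on what was flipped before) is the delicate point. Combining: expected waiting time $\Omega(n)$ per improvement, times $\Omega(n)$ wasted evaluations, times $\Omega(n)$ improvements, yields $\Omega(n^3)$.

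The main obstacle I anticipate is the lower-bound bookkeeping around the FCM rule: one must show that, with the linear potential, a constructive mutation at level $i$ has probability $O(1/n)$ even though the operator flips up to $\Theta(n)$ bits, because flipping any of the $\Theta(n)$ leading ones before reaching position $i+1$ spoils the prefix but is \emph{not} a stopping event, so the hypermutation continues and may do further damage. The clean way to handle this is to observe that the hypermutation sequence is a uniformly random ordering of a uniformly random $M$-subset of positions, and to track only the relative order of the $i+1$ "sensitive" positions (the $i$ leading ones and position $i+1$); a constructive outcome that does not decrease fitness forces position $i+1$ to be first among these, an event of probability $1/(i+1)$. A secondary, more routine obstacle is verifying that the Hamming distance to the optimum really is $\Theta(n)$ throughout the middle fitness levels; this follows because the suffix beyond the current fitness prefix behaves like fresh random bits (selection never conditions on it except through the single bit at position $i+1$), so a Chernoff bound plus a union bound over the $O(n)$ relevant steps suffices. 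With these two pieces in place the rest is the same fitness-level summation as in Theorem~\ref{th:linHD-OM}.
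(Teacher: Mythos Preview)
Your upper bound is correct and essentially identical to the paper's. The lower bound, however, contains a genuine gap stemming from a misreading of the FCM stopping rule. In this paper a \emph{constructive} mutation is one with $f(y)\geq f(x)$, not $f(y)>f(x)$ (see Algorithm~\ref{alg:simple} and the surrounding text). Consequently, if the very first bit flipped lies strictly to the right of position $i+1$, the fitness is unchanged, the step is constructive, and the hypermutation halts after a single evaluation. This event has probability $(n-i-1)/n=\Theta(1)$ for $i\in[n/3,2n/3]$, so your claim that constructive steps occur with probability $O(1/n)$ is false, as is the assertion that ``any constructive mutation that does not immediately flip $i+1$ would have had to flip a leading one first''. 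In particular, a constant fraction of non-improving iterations waste only one evaluation, not $\Theta(n)$.

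The paper repairs this by conditioning on the first bit flipped and distinguishing three events: $(E_1)$ a leading one (probability $i/n$), after which all $M=H(x,opt)$ evaluations are necessarily wasted since distinct-bit flipping can never restore the destroyed prefix; $(E_2)$ position $i+1$ (probability $1/n$), an improvement in one evaluation; $(E_3)$ any later position (probability $(n-i-1)/n$), one wasted evaluation followed by acceptance of an equal-fitness offspring. Solving the resulting recurrence gives $E(f_i)=i\cdot E[H(x,opt)]+n-i=\Theta(n^2)$ for $i=\Theta(n)$, and summing over levels yields $\Omega(n^3)$. Your observation that the suffix bits are uniform, hence $H(x,opt)=\Theta(n)$ throughout the middle levels, is exactly what is needed to bound the waste in case $(E_1)$; the fix is to replace the incorrect ``constructive steps are rare'' argument by this three-case split.
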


\begin{proof}
The probability of improvement in each step is at least $1/n$ which is the probability of flipping the leftmost 0-bit. As at most $n$ improvements are needed and each failure in improvement yields $n$ wasted fitness function evaluations (as the $H(x,opt)$ is at most $n$), the expected time to find the optimum is at most $\sum_{i=1}^n n \cdot n= O(n^3)$.

The proof for lower bound is the same as the proof of Theorem 3.6 in \cite{CorusOlivetoYazdani2017} for the expected runtime of the $(1+1)~IA^{Hyp}_{\geq}$\footnote{A simple (1+1)~EA algorithm with static hypermutation operator (i.e., $M=n$) instead of standard bit mutation. The selection mechanism accepts equally fit solutions.} with the exception that the wasted amount of fitness function evaluations in case of failure is now $H(x,opt)$ instead of $n$.

Considering $i$ as the number of leading 1-bits, we show the expected number of fitness function evaluations until an improvement happens by $E(f_i)$. Any candidate solution has $i$ leading 1-bits with a 0-bit following, and then $n-i-1$ other 0-bits and 1-bits which are distributed uniformly at random \cite{DrosteJansenWegener2002}. We take into account three possible events of $E_1$, $E_2$ and $E_3$ that can happen in the first bit flip; $E_1$ is the event of flipping a leading 1-bit which happens with probability $i/n$, $E_2$ is the event of flipping the first 0-bit which happens with probability $1/n$, and $E_3$ which is the event of flipping any other bit which happens with probability $(n-i-1)/n$. So we get $E(f_i|E_1)=E[H(x,opt)]+ E(f_i)$ , $E(f_i|E_2)=1$, and $E(f_i|E_3)=1+ E(f_i)$.
With law of total expectation and considering that the expected value of $H(x,opt)$, i.e, $E[H(x,opt)]$, is $(n-i)/2 +\epsilon n$, we get $ E(f_i) = \frac{i}{n} \left( \frac{n-i}{2}+\epsilon n+ E(f_i)\right) + \frac{1}{n} \cdot 1 +\frac{n-i-1}{n} \left( 1+ E(f_i)\right)$. Solving it for $E(f_i)$ gives us $E(f_i)=\frac{in-i^2+2n-2i+2\epsilon i}{2}$. We know that the expected number of consecutive 1-bits that follow the leftmost 0-bit is less than two \cite{DrosteJansenWegener2002} which means the probability of not skipping a level $i$ is $\Omega(1)$. The initial 
solution on the other hand will have more than $n/2$ leading ones with 
probability at most $2^{-n/2}$. Thus, we obtain a lower 
bound $(1-2^{-n/2})\sum_{i=n/2}^{n} E(f_i)=\Omega(1) \sum_{i=n/2}^{n} \frac{in-i^2+2n-2i +2\epsilon i}{2}=\Omega(n^3)$ on the expectation. 
\end{proof}

The following two theorems show that exponential fitness-based mutation 
potential gives us at least a logarithmic and at most $\sqrt 
n$ factor speed-up compared to static mutation potentials. Before proving the main results, we introduce the following  lemma that will be used in the proof of upcoming theorems.
\begin{lemma}\label{lem:expo}
 For large enough $n$ and any arbitrarily small constant $\epsilon$, 
$n^{1/n^{\epsilon}}= (1+\frac{\ln{n}}{n^{\epsilon}})(1 \pm o(1))$.
\end{lemma}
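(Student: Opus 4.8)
The plan is to rewrite the left-hand side in exponential form and reduce the claim to a standard small-argument estimate for $e^t$. Writing $n^{1/n^{\epsilon}} = \exp\!\big(\tfrac{\ln n}{n^{\epsilon}}\big)$ and setting $t := \tfrac{\ln n}{n^{\epsilon}}$, the first step is to observe that $t = o(1)$: since $\epsilon>0$ is a constant, $\ln n = o(n^{\epsilon})$, so $t\to 0$ and in particular $0 \le t\le 1$ for all large enough $n$.

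Next I would sandwich $e^t$ between $1+t$ and $(1+t)\big(1+t^2\big)$. The lower bound $e^t\ge 1+t$ is the usual convexity inequality. For the upper bound, for $t\in[0,1]$ the tail of the exponential series is controlled by $e^t = \sum_{k\ge 0} t^k/k! \le 1 + t + t^2\sum_{k\ge 2}1/k! = 1 + t + (e-2)t^2 \le 1+t+t^2$, and $1+t+t^2 = (1+t) + t^2 \le (1+t)(1+t^2)$. Hence $1+t \le e^t \le (1+t)(1+t^2)$ for all large enough $n$.

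Finally, since $t^2 = (\ln n)^2/n^{2\epsilon} = o(1)$, both of these bounds are of the form $(1+t)\big(1\pm o(1)\big)$, so $n^{1/n^{\epsilon}} = e^t = \big(1+\tfrac{\ln n}{n^{\epsilon}}\big)(1\pm o(1))$, as claimed. There is no genuine obstacle here; the only point requiring a word of care is the direction of the error term, which is why I would produce the explicit two-sided bound above rather than appeal to a one-sided Taylor estimate.
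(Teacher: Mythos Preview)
Your proof is correct and follows essentially the same idea as the paper: both rewrite $n^{1/n^{\epsilon}}=e^{t}$ with $t=\frac{\ln n}{n^{\epsilon}}\to 0$ and then invoke the first-order approximation $e^{t}=(1+t)(1\pm o(1))$. The paper justifies this via the limit $(1+t)^{1/t}\to e$, whereas you give explicit two-sided Taylor bounds; your version is slightly more careful, but the substance is the same.
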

\begin{proof}
By raising $\left(1+\frac{\ln{n}}{n^{\epsilon}}\right)$ to the power 
of $\frac{n^{\epsilon}}{\ln{n}} \cdot \frac{\ln{n}}{n^{\epsilon}}$ we have
$\left(1+\frac{\ln{n}}{n^{\epsilon}}\right) ^{\frac{n^{\epsilon}}{\ln{n}} 
\cdot \frac{\ln{n}}{n^{\epsilon}}} = \left(1 \pm 
o(1)\right)e^{\frac{\ln{n}}{n^{\epsilon}}}=(1 \pm 
o(1))n^{1/n^\epsilon}. $\end{proof}

\begin{theorem} 
The \oneoneIA using \expoF optimises \textsc{LeadingOnes} in $O(n^3/\log n)$ and 
$\Omega(n^{5/2+\epsilon})$  expected fitness function evaluations for any 
arbitrarily small constant $\epsilon>0$. 
\end{theorem}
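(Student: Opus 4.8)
\emph{Proposed approach.} I would run the same fitness-level argument as in the proof of Theorem~\ref{th:linHD-LO}, over the levels $i\in\{0,\dots,n\}$ counting leading ones, using that under the ideal assumption $best=opt$ the potential at a point with $i$ leading ones is $M_i:=\lceil n^{1-i/n}\rceil$. The first step is to classify a single hypermutation started at level $i$ by its \emph{first} flipped bit. If it is one of the $i$ leading ones (probability $i/n$), the fitness drops below $i$ and stays there for the rest of that hypermutation --- the flipped leading one can never be restored because bits are flipped without repetition --- so no constructive mutation can occur and the operator deterministically wastes the full $M_i\le n$ evaluations before the offspring is rejected. If it is the first $0$-bit, position $i+1$ (probability $1/n$), the fitness strictly increases, the mutation is constructive, and the hypermutation halts after a single evaluation with an accepted improvement. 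If it is any later bit (probability $(n-i-1)/n$), then $f(y)=f(x)$, the mutation is constructive, and the hypermutation halts after one evaluation as an accepted neutral move that merely reshuffles the tail. Crucially this classification --- in particular the improvement probability $1/n$ and the failure probability $i/n$ --- does not depend on the current tail configuration, so the standard \textsc{LeadingOnes} bookkeeping goes through verbatim. Conditioning the expected number of evaluations $E(f_i)$ spent at level $i$ before an improvement on these three events exactly as in Theorem~\ref{th:linHD-LO} (with the wastage $H(x,opt)$ replaced by the deterministic $M_i$) yields $E(f_i)=\tfrac{i}{n}\bigl(M_i+E(f_i)\bigr)+\tfrac1n+\tfrac{n-i-1}{n}\bigl(1+E(f_i)\bigr)$, hence $E(f_i)=i\,M_i+(n-i)$.

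For the upper bound, since at most $n$ levels are ever traversed, $E[T]\le\sum_{i=0}^{n-1}E(f_i)=\Theta(n^2)+\sum_{i=0}^{n-1} i\lceil n^{1-i/n}\rceil$. I would bound the tail sum via Lemma~\ref{lem:expo}: it gives $1-n^{-1/n}=\Theta((\ln n)/n)$, so the geometric sum $\sum_{i=0}^{n-1}n^{1-i/n}=\Theta(n^2/\ln n)$ and hence $\sum_{i=0}^{n-1} i\,n^{1-i/n}=O(n^3/\log n)$, giving $E[T]=O(n^3/\log n)$.

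For the lower bound, fix an arbitrarily small constant $\epsilon>0$. As in Theorem~\ref{th:linHD-LO}, a Chernoff bound places the initial point below level $n/4$ w.o.p., and every level is reached with probability $\Omega(1)$, because an improving step overshoots the new block of leading ones only by the run of ones immediately behind it, which behaves like a fresh fair-coin sequence --- the neutral moves keep the tail balanced up to $\pm O(\sqrt n)$ --- so the expected overshoot is $O(1)$ and a renewal argument gives constant hitting probability. Writing $W=[n/4,(1/2-\epsilon)n]$ we then have $E[T]=\sum_i P(\text{reach }i)\,E(f_i)\ge\Omega(1)\sum_{i\in W} i\,M_i$, and on $W$ both $i\ge n/4$ and $M_i\ge n^{1-i/n}\ge n^{1/2+\epsilon}$ hold, so the sum has $\Omega(n)$ terms each of order at least $n^{3/2+\epsilon}$, giving $E[T]=\Omega(n^{5/2+\epsilon})$.

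I expect the genuinely delicate points to be the deterministic wastage claim of the first step --- it rests on the precise interaction of the stop-at-first-constructive-mutation rule with the no-repetition flipping and on $M_i\le n$ so that the flips can actually be performed --- and, for the lower bound, the $\Omega(1)$ hitting probability of a prescribed level: this is the one place where the FCM neutral moves make the dynamics differ from the plain $(1+1)$~EA on \textsc{LeadingOnes}, and one must argue that those moves do not drift the tail towards ones and so do not inflate the overshoots. The gap between $O(n^3/\log n)$ and $\Omega(n^{5/2+\epsilon})$ is deliberate: the true expected time is $\Theta(n^3/\log^2 n)$, coming from $\sum_i i\,n^{1-i/n}$, but already recovering the wastage of the polynomially large potentials $M_i\ge n^{1/2+\epsilon}$ suffices for a $\mathrm{poly}(n)$ speed-up over the $\Theta(n^3)$ of static hypermutations~\cite{CorusOlivetoYazdani2017}.
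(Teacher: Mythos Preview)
Your proposal is correct and follows essentially the same route as the paper: the same three-event decomposition inherited from Theorem~\ref{th:linHD-LO}, the same recurrence $E(f_i)=i\,M_i+(n-i)$, the geometric sum handled via Lemma~\ref{lem:expo} for the upper bound, and a window of $\Theta(n)$ levels with $i=\Theta(n)$ and $M_i\ge n^{1/2+\epsilon}$ for the lower bound. Your choice of window $[n/4,(1/2-\epsilon)n]$ is in fact cleaner than the paper's and directly delivers the $\Omega(n^{5/2+\epsilon})$ stated in the theorem, whereas the paper's own computation contains typos and lands at $\Omega(n^{5/2-\epsilon})$; your extra care about the tail remaining uniform under FCM neutral moves is also well placed, as the paper simply cites~\cite{DrosteJansenWegener2002} for this without further comment.
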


\begin{proof}
The expected number of leading 1-bits is less than two in the initialised bit 
string. The probability of improvement in the first step is $1/n$. In case of 
failing in improving at the first step, at most $n^{\frac{n-i}{n}}$ fitness 
function evaluations would get wasted with $i$ showing the number of leading 
1-bits. Therefore, the total expected time to find the optimum is  $E(T) \leq 
\sum_{i=1}^{n} n \cdot n^{(n-i)/n}=O( n^3/\log n)$ considering that $ 
\sum_{i=1}^{n} n^{(i)/n}\leq \sum_{i=0}^{\infty} (n^{1/n})^i= 1/(1-n^i) $ which is $n^2/(\log n)$ by Lemma~\ref{lem:expo}.

The proof for lower bound is similar to the proof of Theorem \ref{th:linHD-LO}, except in the calculation of $E(f_i)$ when we want to consider the amount of wasted fitness function evaluations in case of $E_1$ happening. Here we have $ E(f_i) = \frac{i}{n} \left( n^{(n-i)/n}+ E(f_i)\right) + \frac{1}{n} \cdot 1 +\frac{n-i-1}{n} \left( 1+ E(f_i)\right)$. Solving it for $E(f_i)$ gives us $E(f_i)=in^{i/n}+n-i$. 
Hence the expected time to optimise \textsc{LeadingOnes} is 
$(1-2^{-n/2})\sum_{i=n/2}^{n} E(f_i)=\Omega(1) \sum_{i=1}^{n/2-\epsilon n/2} 
in^{(n-i)/n}+n-i=\\ \Omega(1)\left( \sum_{i=1}^{n/2-\epsilon n/2} n-i+ 
\sum_{i=1}^{n/2-\epsilon n/2} in^{(n-i)/n}\right)$. Evaluating the second 
sum in the interval $i\in [n/2-\epsilon n/2, n/2-\epsilon]$, we get $\epsilon 
n/2 \cdot (n/2 - \epsilon n)  \cdot 
n^{1/2-\epsilon}=\Omega(n^{5/2-\epsilon})$.
\end{proof}

An advantage of Hamming distance-based exponential decays of the mutation potential
compared to fitness-based ones are provided by the following theorem for \textsc{LeadingOnes}.
The reason can be appreciated from Fig. \ref{fig:leadingones}. While the initial fitness is very low, hence the potential is very high, the actual number of bits that have to be flipped to reach the optimum is much smaller. \expoHD exploits this property, thus wastes less fitness evaluations than \expoF.
. 
\begin{theorem}\label{thm:expohdLO}
The \oneoneIA using \expoHD optimises \textsc{LeadingOnes} in 
$O(\frac{n^{5/2+\epsilon}}{\ln n})$ and $\Omega(n^{9/4-\epsilon})$ expected 
fitness function evaluations for any arbitrarily small constant $\epsilon>0$. 
\end{theorem}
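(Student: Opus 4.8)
The plan is to re-run the two \textsc{LeadingOnes} analyses already carried out for \linHD (Theorem~\ref{th:linHD-LO}) and for \expoF, the only genuinely new ingredient being that the potential applied to a search point $x$ with $i$ leading $1$-bits is now the \emph{random} quantity $M=n^{H(x,opt)/n}$. By the standard \textsc{LeadingOnes} invariant, namely that at fitness level $i$ the $n-i-1$ bits behind the first $0$-bit are uniformly distributed~\cite{DrosteJansenWegener2002}, we have $H(x,opt)=1+\mathrm{Bin}(n-i-1,1/2)$, which is concentrated around $(n-i)/2$; hence $M$ is concentrated around $n^{(n-i)/(2n)}$ rather than around $n^{(n-i)/n}$, which is exactly the saving over \expoF advertised before the statement.

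For the upper bound I would fix a level $i$ and set up the same per-level recursion as in the proof of Theorem~\ref{th:linHD-LO}, splitting on whether the first flipped bit is one of the $i$ leading ones ($E_1$, probability $i/n$), the first $0$-bit ($E_2$, probability $1/n$), or one of the remaining bits ($E_3$, probability $(n-i-1)/n$). As there, $E_2$ is a constructive step costing one evaluation, $E_3$ is a neutral (hence constructive) step costing one evaluation and leaving the tail uniform, and $E_1$ is the only expensive case: once a leading $1$-bit has been set to $0$ it cannot be restored within the same hypermutation because distinct bits are flipped, so the offspring fitness never returns to $i$, all $M$ bits get flipped, and exactly $M$ evaluations are wasted. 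Since flipping a uniform bit of a uniform tail leaves it uniform, taking expectations and solving the recursion gives $E(f_i)=i\cdot E[M]+(n-i)$. A Chernoff bound gives $H(x,opt)\le (n-i)/2+\epsilon n$ except with probability $e^{-\Omega(\epsilon^2 n)}$, so $E[M]\le n^{(n-i)/(2n)+\epsilon}+o(1)$; summing, $\sum_i E(f_i)\le O(n^2)+n\sum_i E[M]\le O(n^2)+n^{1+\epsilon}\sum_{i}n^{(n-i)/(2n)}$, and the geometric series equals $\Theta(n^{3/2}/\ln n)$ by Lemma~\ref{lem:expo} (as in the preceding two theorems), which yields the claimed $O(n^{5/2+\epsilon}/\ln n)$ after renaming $\epsilon$.

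For the lower bound I would follow the lower-bound part of Theorem~\ref{th:linHD-LO}. The initial point has fewer than $n/2$ leading ones except with probability $2^{-n/2}$, and, since the fitness gain per improving step has expectation below two, level $i$ is not skipped with probability $\Omega(1)$~\cite{DrosteJansenWegener2002}. The same recursion, now reading the $E_1$ term as a lower bound on the wasted evaluations, gives $E(f_i)\ge i\cdot E[M]$; by Jensen's inequality (or, equivalently, a Chernoff lower tail $H(x,opt)\ge (1-\epsilon)(n-i)/2$, which is where the $n^{-\epsilon}$ slack enters) one has $E[M]=E\big[n^{H(x,opt)/n}\big]\ge n^{E[H(x,opt)]/n}\ge n^{1/4-\epsilon}$ for every $i\le n/2$, since $E[H(x,opt)]=(n-i+1)/2\ge n/4$. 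Hence $E(f_i)=\Omega(n^{5/4-\epsilon})$ uniformly for $i\in[\,n/2-\epsilon n/2,\;n/2\,]$, and summing over these $\Theta(\epsilon n)$ levels (each entered with constant probability) gives $\Omega(n^{9/4-\epsilon})$.

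The main obstacle is the upper bound: the deterministic estimate $M\le n^{(n-i)/n}$ is not good enough — plugged into the sum it only reproduces the weaker \expoF bound $O(n^3/\log n)$ — so one genuinely has to exploit that $H(x,opt)$ is a binomial concentrated at $(n-i)/2$ rather than sitting at its maximum $n-i$, which is precisely the reason \expoHD beats \expoF on \textsc{LeadingOnes}. The additive $\epsilon n$ deviation in the Chernoff bound is the only source of the $n^{\epsilon}$ factor and also conveniently covers the levels $i$ close to $n$, where the binomial is too short for concentration to bite but where $M=O(1)$ anyway. Everything else — the recursion, the geometric summation via Lemma~\ref{lem:expo}, and the $\Omega(1)$ probability of not skipping a level — is a routine repetition of the two preceding \textsc{LeadingOnes} calculations with $M=n^{H(x,opt)/n}$ substituted for the static and fitness-based potentials.
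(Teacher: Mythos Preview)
Your proposal is correct and follows essentially the same route as the paper: both arguments reuse the per-level recursion from Theorem~\ref{th:linHD-LO}, replace the wasted-evaluation term by $n^{H(x,opt)/n}$, invoke a Chernoff bound on the uniform tail to pin $H(x,opt)$ near $(n-i)/2$, and then sum the resulting geometric series via Lemma~\ref{lem:expo}. Your explicit observation that $E_3$ preserves tail uniformity, and your use of Jensen's inequality as an alternative for the lower bound on $E[M]$, are minor but clean additions relative to the paper's write-up.
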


Given that \expoHD provides larger speed-ups  compared to the other IPH operators and is stable to the scaling of fitness functions,
we will use it in the remainder of the paper.

\section{Realistic Inefficient Behaviour}In this section we consider the usage of  \expoHD in realistic applications where the optimum is unknown.
To this end the best found solution will be used by the operator rather than the unknown optimum.
We combine \expoHD with hybrid ageing, as in the Opt-IA AIS and call it $(1+1)$~Opt-IA~\cite{CutelloTEVC}. It's pseudo-code is provided in Algorithm \ref{alg:alg1+ageing}.
Ageing has been shown to enable algorithms to escape from local optima either by identifying a gradient leading away from it or by restarting the whole optimisation process.

Our aim is that the more local optima are identified by the algorithm, the more \expoHD approximates its ideal behaviour.
However, we will show that this is not the case using the  well-studied
bimodal benchmark functions \textsc{TwoMax} (\ref{func:twomax})~\cite{FriedrichOSWECJ09, SudholtBookChapter2019, OlivetoSudholtZarges2018, CovantesOsunaSudholt2017} as an example:
\begin{equation}\label{func:twomax}
\textsc{TwoMax}:=\max \left\{\sum_{i=1}^n x_i, \; n-1 \sum_{i=1}^n x_i \right\}
\end{equation} 
The function is usually used to evaluate the global exploration capabilities of evolutionary algorithms i.e., whether the populations identify both optima of the function.
Our analysis shows that once the $(1+1)$~Opt-IA escapes from one local optima, the mutation rate will increase as the algorithm climbs up the other branch.
As a result the algorithm struggles to identify the other optimum and wastes more and more fitness function evaluations as it approaches it. 
Thus, defeating the whole purpose behind IPH.

On the bright side, we will show that \expoHD combined with ageing can escape from local optima. We will use the well known \textsc{Cliff} function for the purpose
where static hypermutations are inefficient because due to their high mutation rates, even if they escape, they jump back to the local optima with high probability.  
 \textsc{Cliff} is defined as follows:
 \begin{equation} \label{func:cliff}
\textsc{Cliff}_{k}(x)=\begin{cases}
\sum_{i=1}^n x_i & \text{if}\; \sum_{i=1}^n x_i \leq n-k, \\
\sum_{i=1}^n x_i -k+ 1/2 & \text{otherwise.}
\end{cases}
\end{equation}
Both bimodal functions are illustrated in Fig \ref{fig:benchmarks}. 


 Among the different varaints of ageing, {\it hybrid} ageing has been shown to be very efficient at escaping local optima~\cite{OlivetoSudholt2014,CorusOlivetoYazdani2017,CorusOlivetoYazdani2018}. Using this operator, the individual is assigned with an initial $age=0$. During each iteration of the algorithm the age  increases by 1 and is passed to the offspring if the offspring does not improve over its parent's fitness. If the offspring is better than the parent, then its age is set to 0. At the end of each iteration any individual with age larger than a threshold ($\tau$) is removed with probability $1/2$ and in case there is no other individual left in the population, a new individual is initialised uniformly at random. 

\begin{algorithm} 
    \caption{$(1+1)$~Opt-IA with \expoHD}
    \begin{algorithmic}[1] 
             \STATE{Initialise $x \in\{0,1\}^n$ uniformly at random and add to $P$;}
            \STATE{Set $best=x$};      
            \STATE Evaluate $f(x)$;
            \STATE{Set $x.age=0$};      
            \WHILE{termination condition is not satisfied} 
             \STATE{$x.age=x.age+1$};
             \STATE{ $M =$\expoHD};
            \STATE{Create $y$ by flipping at most $M$ distinct bits of $x$ 
selected uniformly at random one after anotherther until a \textit{constructive 
mutation} happens;}
            \STATE{If $f(y) > f(x)$, then $y.age=0$. Else, $y.age=x.age$};
            \STATE{If $f(y)\geq f(best)$, then set $best=y$}
            \STATE{Add $y$ to $P$};
			\STATE{For $x$ and $y$ in $P$, if $age \geq \tau$ then remove the individual with probability $1/2$}; 
			\STATE {Select the best individual in $P$ and remove the other};
			\STATE{If $|P|=0$, create $x \in\{0,1\}^n$ uniformly at random and add to $P$;}
            \ENDWHILE
    \end{algorithmic} \label{alg:alg1+ageing}
\end{algorithm}

 \begin{figure}[t!]
 \centering
\includegraphics[width=0.4\textwidth]{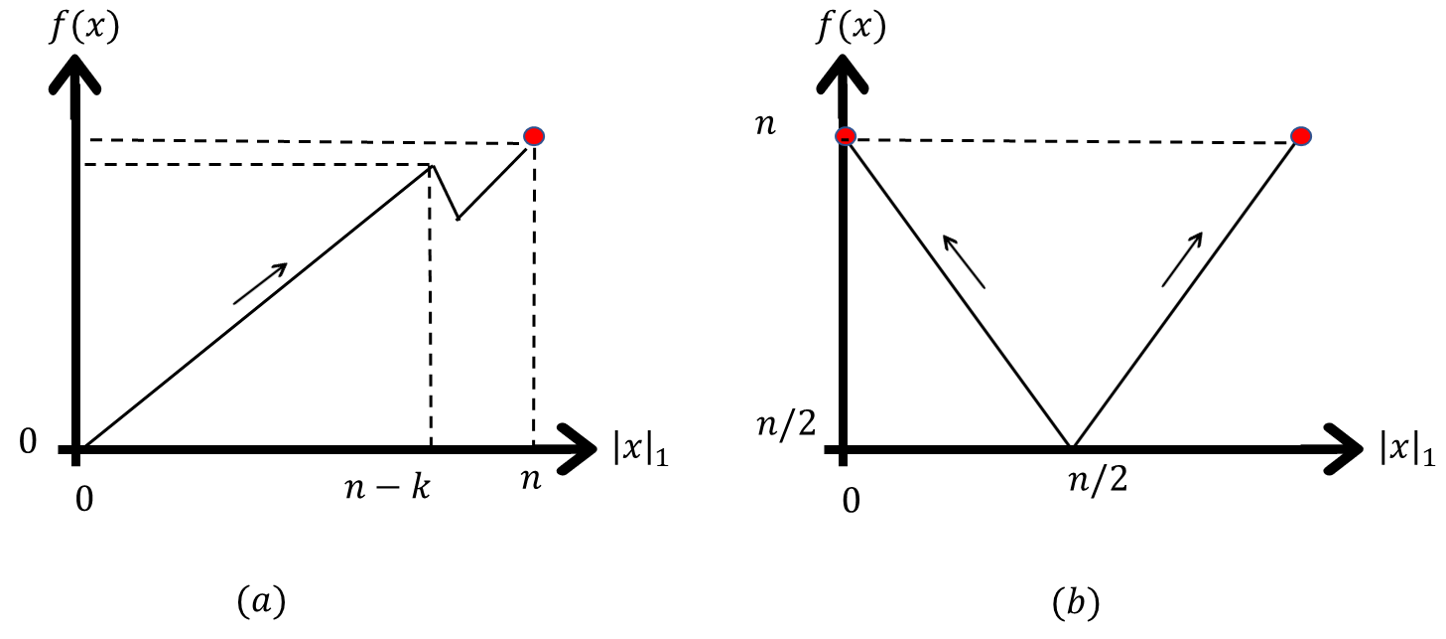}
\caption{(a) \textsc{Cliff} and (b) \textsc{TwoMax} test functions}
\label{fig:benchmarks}
 \end{figure}
%

The following theorem shows that after the algorithm escapes from the local optimum the mutation rate increases as the algorithm climbs up the opposite branch.
This behaviour causes a large waste of fitness evaluations defying the objectives of IPH.
\begin{theorem} \label{thm:twomaxexpo}
The expected runtime of Algorithm \ref{alg:alg1+ageing} with \expoHD to optimise 
\textsc{TwoMax} is $O(n^{2}\log n)$ with $\tau=\Omega(n^{1+\epsilon})$ for some 
constant $\epsilon>0$.
\end{theorem}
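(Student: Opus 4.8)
The plan is to follow the trajectory of Algorithm~\ref{alg:alg1+ageing} and, above all, to track the reference point $best$ that fixes the potential \expoHD, equal to $n^{H(x,best)/n}$; the whole effect to be captured is that $best$ freezes at a local optimum, after which the potential is forced to \emph{grow} along the opposite slope rather than to shrink. First I would observe that whenever the current point $x$ equals $best$ --- which holds throughout any monotone climb, since every strict improvement sets $best:=x$ --- we have $H(x,best)=0$, hence $M=1$; with $M=1$ the operator flips a single uniformly random bit and keeps it iff the offspring is no worse, i.e., it behaves exactly as \oneonerlsone with ``$\geq$''-selection. On \textsc{TwoMax} this reaches one of the two peaks in expected $O(n\log n)$ steps ($\sum_{i=1}^{n/2}n/i=O(n\log n)$), and since improvements then occur at least every $O(n\log n)$ steps w.h.p.\ while $\tau=\Omega(n^{1+\epsilon})$, ageing never intervenes. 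With probability $\tfrac12-o(1)$ this peak is the global optimum $1^n$ and we are done in $O(n\log n)$; otherwise the search is trapped at the local optimum $0^n$ with $best=0^n$.

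From $0^n$ no offspring is ever accepted, so after $\tau$ iterations the age exceeds $\tau$; inspecting the four ageing outcomes on $\{x,y\}$ shows that each subsequent iteration empties the population --- triggering a uniform random restart --- with probability at least $\tfrac14$, whereas the alternative ``downhill walk'' almost surely restarts long before reaching the \textsc{TwoMax} valley, since it takes $O(1)$-sized steps against a per-step restart probability at least $\tfrac14$. The decisive point is that $best$ is \emph{not} reset by a restart, and that $f(0^n)$ is attained on the \textsc{OneMax} slope only at Hamming distance $O(1)$ from $1^n$, so $best$ stays pinned at $0^n$ essentially throughout the climb. Consequently a fresh random point $x'$ has $H(x',0^n)=|x'|_1$ and potential $M=n^{|x'|_1/n}$, which is $\approx\sqrt n$ near the valley and grows to $\approx n$ as $|x'|_1\to n$ --- exactly the static-hypermutation regime along the \textsc{OneMax} slope.

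The heart of the proof is to bound the cost of one such climb. Conditioning on the event $|x'|_1\geq n/2+n^{2/5}$, which has probability $\tfrac12-o(1)$, I would show the algorithm reaches $1^n$ in expected $O(n^2\log n)$ evaluations. At a point with $i$ zero-bits the potential is $M=n^{(n-i)/n}$, and I would prove: (i)~a strictly improving step occurs with probability at least $i/n$, because it suffices that the \emph{first} sequentially flipped bit is one of the $i$ zeros, which already yields a constructive offspring, hence accepted (and the age reset); (ii)~each iteration uses at most $M=n^{(n-i)/n}$ evaluations, so that amount is wasted whenever no strict improvement is found; (iii)~a ballot/gambler's-ruin estimate on the intermediate partial flips (Lemma~\ref{thm:ballot}) shows that within $M=n^{1/2+o(1)}$ flips the point cannot be dragged across the valley --- that would need net-removing $\Omega(n^{2/5})$ ones, whereas the net displacement after $M$ flips has standard deviation only $n^{1/4+o(1)}$, so it is exponentially unlikely --- whence every improving step raises $|x|_1$ by exactly one and stays on the slope. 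Linearity of expectation then bounds the cost by $\sum_{i=1}^{n/2}\tfrac{n}{i}\cdot n^{(n-i)/n}\leq n^2\sum_{i=1}^{n/2}\tfrac1i=O(n^2\log n)$ evaluations; moreover the climb uses only $\sum_i n/i=O(n\log n)$ iterations with all gaps between improvements $O(n\log n)$ w.h.p.\ (geometric tails and a union bound), so $\tau=\Omega(n^{1+\epsilon})$ keeps ageing quiet and the climb terminates at $1^n$.

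To finish, a restart landing on the \textsc{ZeroMax} slope ($|x'|_1\leq n/2-n^{2/5}$) returns to $0^n$ in $O(n^{3/2}\log n)$ evaluations --- now $M$ \emph{shrinks} towards $1$ --- after which the process repeats, and the $o(1)$-probability ``middle band'' together with the negligible event of a climb crossing the valley are absorbed into a crude $O(n^2\log n)$ bound per restart. Hence the number of restarts before a successful one is stochastically dominated by a geometric variable with success probability $\tfrac12-o(1)$, each restart episode costs $O(n^2\log n+\tau)$ evaluations in expectation, and therefore $E[T]\leq O(n\log n)+O(1)\cdot O(n^2\log n+\tau)=O(n^2\log n)$ for $\tau=\Theta(n^{1+\epsilon})$ with $\epsilon$ a small enough constant (large enough to silence ageing during the climbs, while $o(n^2\log n)$ so that waiting does not dominate). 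I expect part~(iii) together with the cost accounting of the slow climb to be the real obstacle: handling the first-constructive-mutation operator when the potential itself scales with the Hamming distance to the frozen point $0^n$, proving both that a failed hypermutation wastes $\Theta(n^{(n-i)/n})$ evaluations and that it cannot carry the point over the \textsc{TwoMax} valley. The ballot theorem and careful drift/variance estimates on the sequential flips are what is needed there; the remainder is bookkeeping on ageing and restarts.
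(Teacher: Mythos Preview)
Your plan is essentially the paper's proof: the $M=1$/\textsc{RLS} phase up to the first peak, ageing and reinitialisation with $best$ frozen there, the case split over the two branches after each restart, the cost bound $\sum_{i}(n/i)\cdot n^{(n-i)/n}=O(n^{2}\log n)$ for the climb on the opposite branch, and a geometric number of restarts until a reinitialised point lands on that branch and stays there. The paper organises these pieces the same way.

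The one place where your sketch is genuinely thinner than the paper is your part~(iii). Your deviation argument (standard deviation $n^{1/4+o(1)}$ versus valley distance $\Omega(n^{2/5})$) only covers the \emph{start} of the climb, where $M\approx\sqrt{n}$. As $|x|_{1}\to n$ the potential grows to $\Theta(n)$, and a single hypermutation can then flip almost every bit; in that regime a constructive mutation on the \emph{other} side (a point with $\leq k$ ones when the parent has $n-k$ ones) is a real possibility, not one you can rule out by a $\sqrt{M}$-variance bound. The paper handles this level-by-level: it bounds the probability that at most $k/2$ of the $1$-bits fall among the last $k$ positions of the random flip order by roughly $(k/(n-k))^{k}$, union-bounds over intermediate steps, and then treats $k\in\{1,2,3\}$ separately. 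You have correctly identified (iii) as the crux and named the right tool (the ballot theorem), but to carry the argument through you would need to make it work for every level $k$, not only near the valley; otherwise the ``every improving step raises $|x|_{1}$ by exactly one and stays on the slope'' claim is unsupported in the late phase of the climb.

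A minor remark: you are right to note that the stated $O(n^{2}\log n)$ bound implicitly also needs $\tau=o(n^{2}\log n)$, since each restart cycle costs an additive $\tau$; the paper's statement only imposes $\tau=\Omega(n^{1+\epsilon})$ and is silent on this point, so your accounting is if anything more careful.
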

\begin{proof}
Let $x_t$ be the current solution at the beginning of the iteration $t$. Immediately after the initialisation, the 'best' seen is the current individual $x_1$ itself and the mutation potential is $M=M_{expoHD}=n^{HD(x_1,x_1)/n}=n^0=1$. Note that  $x_t \neq x_{t+1}$ if and only if either $f(x_{t+1})\geq f(x_t)$ or $x_{t+1}$ is reinitialised after $x_t$ is removed from the population due to ageing. Thus, the mutation operator flips a single bit at every iteration until ageing is triggered for the first time.
 The improvement probability will be at least $ 1/n $ until either $1^n$ or $0^n$ is sampled. 
Given that the ageing threshold $\tau$  is at least $n^{1+\epsilon}$ for some constant $\epsilon >0 $, 
the probability that the current solution will not improve $\tau$ times consecutively is at most 
$(1-\frac{1}{n})^{n^{1+\epsilon}}$ $e^{-\Omega(n^\epsilon)}$. 
Hence, with overwhelmingly high probability, the first optimum will be found before ageing is triggered. Given that the ageing operator is not triggered, the expected time to find the first optimum is at most $O(n \log{n})$ as for the standard \oneonerlsone \footnote{\oneonerlsone or random local search flips exactly one bit at each iteration as the variation operator.} . After finding the first optimum, no single-bit flip can yield an equally fit solution. The individual then reaches age $\tau $ and the ageing reinitialises the current solution. Thus, with 
$\left(1-2^{n}\right) \cdot \left(1-e^{-\Omega(n^\epsilon)} \right) $ probability a new solution will be initialised in $O(n\log{n}) + \tau + n$ and the current \emph{best} will be the first discovered optimum. Let the first and second branch denote the subsets of the solution space which consist of solutions with less than and more than Hamming distance $n/2$ to the first discovered optimum respectively. 

We first consider the case if the reinitialised solution is on the first branch. Given that the current solution has fitness $i$, the distance to the best seen is $n-i$ and the mutation potential is $M=n^{\frac{n-i}{n}}$. Since, the first constructive mutation ensures that the probability of improvement is always at least $1/n$, with overwhelmingly high probability the ageing will not be triggered until either optimum is discovered. Thus, given that the current solution is always on the first branch, the proof of Theorem~\ref{cor:omexpohd} carries over and the expected time to find the first discovered optimum once again is at most $O(n^{3/2}\log{n})$ in expectation. When the first discovered optimum is sampled again, with overwhelming probability, the current solution is reinitialised with the first discovered optimum as the best seen solution in at most $\tau + n$ iterations.

Now, we consider the case where the current solution is on the second branch. A lower 
bound on the probability that the current solution will reach the optimum of the second branch before 
sampling an improving solution in the first branch will conclude the proof since the expected time to do so is $O(n^{3/2}\log{n})$ given that the current solution does not switch branches before. 

We will start by bounding the mutation potential for a solution in the second branch with fitness value $n-k$. Since the  \expoHD is always smaller than \linHD, we can assume that no more than $n-k$ bits will be flipped.

Without losing generality, let the second branch be the branch with more 0-bits. We will now consider the final solution sampled by the hypermution operator since it has the highest probability of finding a solution with at least $n-k$ 1-bits (\emph{i.e.}, switching to the first branch). The current solution has $n-k$ 0-bits and $k$ 1-bits.
If more than $k/2$ 1-bits are flipped, then the number of $1$s in the final solution after $n-k$ mutation steps is less than $n-k$ since the number  of $0$s flipped to $1$s is less than $n-k-k/2$ and the number of remaining 1-bits is less than $k/2$. The event that at most $k/2$ 1-bits are flipped is equivalent to the event that in a uniformly random permutation of the $n$ bit positions at least $k/2$ $1$-bits are ranked in the last $k$ positions 
of the random permutation. We will now bound the probability that exactly $k/2$ $1$-bits are in the last $k$ position since having more has a smaller probability. Each particular outcome of the last $k$ positions has the equal probability of $\prod_{i=0}^{k-1}(n-i)^{-1}$. There are $\binom{k}{k/2}$ different equally like ways to chose $k/2$ $1$-bits and $k!$ different permutations of the last $k$ positions thus the probability of having exactly $k/2$ 1-bits in the last $k$ positions is: 
$
\prod\limits_{i=0}^{k-1}\frac{1}{n-i}\cdot \binom{k}{k/2}\cdot k!=\prod\limits_{i=0}^{k-1}\frac{1}{n-i}\cdot \left(\frac{k!}{(k/2)!}\right)^{2}< \left( \frac{k}{n-k}\right)^k
$.
For any $k \in [4,\frac{n}{2}-\Omega(\sqrt{n})]$ this probability is in the 
order of $\Omega(1/n^4)$. Using union bound over the probabilities of having 
more than $k/2$ $1$-bits and the probabilities of improving before the final 
step, we obtain an $\frac{k}{2}\cdot (n-k)\cdot \Omega(1/n^4)= \Omega(1/n^2)$. 
Since the new solutions are uniformly sampled, the number of bits in the 
solutions are initially distributed binomially with parameters $n$ and $1/2$ 
which has a variance in the order of $\Theta(\sqrt{n})$ and implies that with 
constant probability $k <\frac{n}{2}-\Omega(\sqrt{n})$ in the initial solution. 
Given that the expected time in terms of generations is in the order of $O(n 
\log{n})$, the total probability of switching branches while 
$k\in[k,\frac{n}{2}-\Omega(\sqrt{n})]$ is at most 
$\left(1+1/n^2\right)^{O(n\log{n})}=1-\Omega(1)$. Finally, we will consider the 
cases of $k\in\{1,2,3\}$ separately. When $k=1$, the probability of flipping 
less than $k/2$ 1-bits is equivalent to the probability of not flipping the 
single 1-bit which happens with probability $1/n$. For $k=2$, similarly we have 
to flip at most one 1-bit with probability $O(1/n)$. For $k=3$, it is necessary 
that at least two 1-bits are not flipped which happens with probability at most 
$O(1/n^2)$ probability. Thus, the probability of switching branches when $k<4$ 
is at most $O(1/n)$, which lower bounds the total probability of switching 
branches in the order of $1-\Omega(1)$ given that the initial solution has at 
least $k> \frac{n}{2}-\Omega(\sqrt{n})$ 1-bits (which also occurs with at least 
$\Omega(1)$ probability). Given that no switch occurs, the expected time to find 
the second optimum is $\sum_{i=1}^{n/2+\epsilon n} \frac{n}{i} \cdot 
n^{\frac{n-i}{n}}=n^2\sum_{i=1}^{n/2-\epsilon n} \frac{1}{i} \cdot 
n^{\frac{-i}{n}}<n^2\sum_{i=\epsilon n}^{2 \epsilon n} \frac{1}{2n} \cdot 
n^{\frac{-i}{n}}<$ at most $O(n^{2}\log{n})$ as  $n^{-i/n} \leq (1-\ln{n}/n)$ 
for all $i$.
\end{proof}

Now we show that, differently from static hypermutations \expoHD 
combined with ageing can escape from the local optima of \textsc{Cliff}, 
hence optimise the function efficiently.

\begin{theorem}\label{thm:cliffexpo}
Algorithm \ref{alg:alg1+ageing} with \expoHD and $\tau=\Omega(n^{1+\epsilon})$ 
for 
an arbitrarily small constant $\epsilon$  optimises \textsc{Cliff}$_k$ with 
$k<n(\frac{1}{4}-\epsilon)$ and $k=\Theta(n)$ in $O(n^{3/2}\log{n}+ \tau 
n^{1/2} + \frac{n^{7/2}}{k^2})$ 
fitness function evaluations. 
\end{theorem}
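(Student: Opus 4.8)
The plan is to follow the single individual the algorithm keeps alive through the three parts of the cube it can occupy: the \textsc{OneMax}-slope $\{x:\sum_i x_i\le n-k\}$ below the cliff, the local optimum $L=\{x:\sum_i x_i=n-k\}$ (fitness $n-k$), and the ``valley'' $\{x:\sum_i x_i> n-k\}$ above the cliff, whose fitness grows with the number of ones up to the global optimum $1^n$ (fitness $n-k+1/2$). The only property of the potential I will use repeatedly is the elementary inequality $n^{h/n}<h$, valid for every integer $h$ with $2\le h\le n-1$ and $n$ large, which holds because $x\mapsto(\ln x)/x$ is decreasing on $(e,\infty)$, hence $\tfrac{h}{n}\ln n\le\ln h$.

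First I would bound the time to (re)reach $L$. Immediately after initialisation $best=x$, so $H(x,best)=0$ and $M=1$; moreover every accepted move on the slope is a strict improvement, which updates $best$ to the new point, so the potential stays equal to $1$ and the algorithm is literally \oneonerlsone\ on \textsc{OneMax} until it hits $L$. Since the improvement probability from a point with $i<n-k$ ones is $(n-i)/n=\Omega(1)$, this takes $O(n\log n)$ iterations, and the probability that ageing is triggered in the meantime is $(1-\Omega(1))^{\tau}=e^{-\Omega(n^{1+\epsilon})}$, i.e.\ negligible. After a restart the situation differs only in that $best$ is a previously discovered point of $L$, which lies at Hamming distance $\Theta(n)$ from a fresh random point, so $M=n^{\Theta(1)}=O(\sqrt n)$ all along the slope; summing $\tfrac{n}{n-i}\cdot M$ over the $\Theta(n)$ levels yields the $O(n^{3/2}\log n)$ contribution, again without ageing firing prematurely w.o.p.

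Next I would analyse the escape. On $L$ no point at Hamming distance one, hence (since $M=1$ there) no hypermutation offspring, is at least as fit, so the current point is frozen and its age grows; after at most $\tau+O(1)$ iterations the age exceeds $\tau$, and then in each iteration both the parent and its strictly worse offspring are over the threshold, so with probability $1/4$ both are deleted and the run restarts, with probability $1/4$ only the parent is deleted and the current point becomes a uniformly random Hamming neighbour of $L$, and with probability $1/2$ nothing changes. Conditioned on moving to a neighbour, with probability $k/n=\Omega(1)$ a zero-bit was flipped, so the new point has $n-k+1$ ones and lies in the valley. Hence within $O(1)$ iterations after the age threshold is crossed the run is launched into the valley with probability $\Omega(k/n)$; otherwise it either restarts (handled above) or lands on a first-branch neighbour of $L$, from which it climbs back to level $n-k$ in $O(1)$ expected iterations with the age reset, and a new ``ageing round'' of length $\Theta(\tau)$ begins.

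The crux is to show the valley cannot be left once entered a little beyond the cliff. A valley point with $j$ ones that is \emph{aligned} with $best$ --- it contains all $n-k$ ones of the current $best$ plus $h:=j-(n-k)$ further ones, which is exactly the configuration produced by the escape and then by repeatedly flipping zero-bits --- has $H(x,best)=h$, so $M=n^{h/n}$; to reach the first branch one must flip at least $h$ one-bits before the first constructive step, which requires $M\ge h$, impossible for $h\ge3$ by the inequality above. So the only dangerous levels are $j\in\{n-k+1,n-k+2\}$, and from either one a hypermutation step advances up the valley whenever its first flip hits one of the $\Theta(k)$ zero-bits (probability $\Omega(k/n)=\Omega(1)$) and otherwise falls back to the first branch with the age reset, so the run escapes each dangerous level before falling back with probability $\Omega(1)$, and once $j\ge n-k+3$ it is trapped in the valley and climbs monotonically (up to neutral moves) to $1^n$ in $\sum_{j}\tfrac{n}{n-j}\,n^{(j-n+k)/n}=O(n^{7/2}/k^2)$ further evaluations in expectation. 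Combining: each ageing round costs $\Theta(\tau)$ (plus at most one $O(n^{3/2}\log n)$ re-climb), a round launches a valley excursion with probability $\Omega(1)$, and --- accounting conservatively for the chance that alignment drift forces a late fall-back, so that an excursion succeeds with probability $\Omega(1/\sqrt n)$ --- $O(\sqrt n)$ rounds suffice in expectation, which gives the $\tau n^{1/2}$ term, and adding the valley-climb cost gives the claimed $O(n^{3/2}\log n+\tau n^{1/2}+n^{7/2}/k^2)$. I expect the real work to be in making the ``cannot fall back'' step robust: neutral (equal-fitness) hypermutation moves can slowly misalign the current point from $best$ and inflate $M$, so one must argue that this drift stays $o(n/\log n)$ throughout an excursion --- whence $M=1+o(1)$ and no fall-back is possible once $h\ge3$ --- and this is also where the restriction $k<n(1/4-\epsilon)$ is used, since it keeps the valley's fitness strictly below the cliff fitness $n-k$ for every $j<n$ and so controls which neutral moves are available.
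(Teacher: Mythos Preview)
Your overall decomposition—RLS behaviour on the slope, stalling at the local optimum, escape via ageing into the valley, then a monotone climb to $1^n$—matches the paper's structure. Your observation that an \emph{aligned} valley point with $h$ extra ones has $M=n^{h/n}<h$ for $h\ge3$, so a single hypermutation cannot shed the $h$ ones needed to reach the slope, is correct and in fact tidier than the paper's corresponding step.

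The genuine gap is your resolution of the misalignment issue. You correctly flag that neutral (equal-fitness) swaps can break alignment and push $H(x,best)$ above $h$, but your proposed fix—``the drift stays $o(n/\log n)$, whence $M=1+o(1)$''—does not work: even for a perfectly aligned point, $H(x,best)=h$ ranges up to $k=\Theta(n)$ during the climb, so $M=n^{h/n}$ is certainly not $1+o(1)$ once $h$ is non-trivial, and misalignment only makes this larger. What actually closes the gap is the bound the paper uses: since both the current valley point and $best$ have at most $k$ zeros, $H(x,best)\le2k<n/2$ throughout the valley, hence $M<\sqrt n$ uniformly. This can be combined with your own alignment idea—neutral moves require $M\ge2$, which (for an aligned point) first occurs at $h\approx n\ln2/\ln n\gg\sqrt n$, and until then single-bit flips keep the point aligned—to conclude $M<\sqrt n<h$ for the entire remainder of the climb, regardless of subsequent misalignment. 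The paper reaches the same conclusion by a Ballot-theorem argument that at least half of all accepted offspring are strict improvements, so the number of extra ones outruns $\sqrt n$ before $H$ reaches $n/\ln n$. Either way, the bound $H\le2k<n/2$ is precisely where the hypothesis $k<n(1/4-\epsilon)$ enters; your stated reason—that it ``keeps the valley's fitness strictly below $n-k$''—holds for every $k\ge1$ and is not the relevant use.
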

\begin{proof}
The analysis will follow a similar idea to the proof of 
Theorem~\ref{thm:twomaxexpo}. After initialisation the initial mutation 
potential is $M=1$ since the current solution is the best seen solution. With 
single bit-flips it takes in expectation at most $O(n)$ to find on 
the local optima of the cliff (a search point with $n-k$ 1-bits) as the 
improvement probability is always at least $(n-k)/n = \Omega(1)$. Since the 
local optima cannot be improved with single bit flips, in $\tau$ generations 
after it was first discovered the ageing will be triggered and in the following 
$n$ steps the current solution will be removed from the population due to 
ageing with probability at least $1-2^{-n}$. Hamming distance of the  
reinitialised soluton will be distributed binomially with parameters $n$ and 
$1/2$ and with overwhelmingly high probability will be smaller than $n/2 + 
n^{2/3}$, yielding an initial mutation potential of $M=\Omega(n^{1/2})$. We 
pessimistically assume that the mutation potential will not decrease until the 
local optima is found again, which implies that the expected time will be at 
most $O(n^{3/2}\log{n}+\tau n^{1/2})$ since each iteration will waste an extra 
$\Omega(n^{1/2})$ fitness function evaluations. After finding a local optima 
again, the mutation potential will be $M=1$ since it will replace the 
previously observed local optima as the best seen. The process of 
reinitialisation and reaching the local optima will repeat itself until the 
following event happens.

If the local optima produces an offspring with $n-k+1$ bits  with probability 
$k/n$ and if this solution survives the ageing operator with 
probability $(1-p_{die}) $, then  the reinitialised solution will be rejected 
since its fitness value is less than $n-k$ with overwhelming probability. 
The Hamming distance of this new solution to the best seen will be exactly one 
since it is created via a single bit-flip, thus itsmutation potential 
will be $M=1$.  Moreover, if the surviving offspring improves again with 
proability $(k/n)$ in the next iteration, it will reset its age to zero and 
will 
have Hamming Distance at least two to any local optima.  In expected 
$O(n/\log{n})$ iterations (not function evaluations), this solution will reach 
the global optimum unless a solution with less than $n-k$ 1-bits is sampled 
before. Initially this will be impossible since $M=1$ for at least $\omega(1)$ 
more steps and later $M<3$ as long  as the distance to the last seen local 
optima is at most $n/\ln{n}$ since $n^{\frac{n/\ln{n}}{n}}=e$. 
Note that the number of $1$s does not always reflect the actual Hamming 
distance since more than one bit can be flipped in an accepted offspring. 
We will pessimitically assume that all improvements has increased the Hamming 
distance by three until the total Hamming distance reaches $n/\ln{n}$, which 
implies that there has been $n/(3\ln{n})$ accepted solutions.  Ballot theorem 
implies that sampling a solution at least as good as the parent (which are the 
only solutions that are accepted) has probability at most $2 i /n$ where $i$ is 
the number of 0-bits in the solution. Since the probability of improving in the 
first step is at least $i/n$, we can conclude that the conditional probability 
that an accepted offspring is an improvement is at least $1/2$. Thus, when the 
Hamming distance to the local optima reaches $n/\ln{n}$, in expectation the 
current solution will have at least $n/(6\ln{n})$ extra 1-bits compared to 
the local optima and at least $n^3/5$ extra bits with overwhelmingly high 
probability. The Hamming distance to the local optima can be at most $2k$ 
since both the local optima and the current solution have less than $k$ 
0-bits. Since $k<n/4$ the the mutation potential is at most $\sqrt{n}$, thus, 
no hypermutation can yield a solution with less than $n-k+2$ 1-bits. Therefore, 
once a solution with $n-k+2$ bits is added to the population the algorithm 
finds 
the optimum with overwhelming probability in $O(n\log{n})$ iterations and in at 
most $O(n^{3/2}\log{n})$ fitness functions evaluations since the the mutation 
potential is at most $\sqrt{n}$. The probability of obtaining a solution with 
$n-k+2$ 1-bits at the end of each cycle of reinitialisation and removal of the 
local optima due to ageing  is $(1-p_{die}) (k/n)^2$. Since each such cycle 
takes $O(n^{3/2})$ fitness function evaluations our claim follows.
\end{proof}

\section{An Efficient Opt-IA with IPH}
In the previous section we observed in the analyses of both  \textsc{Cliff} and \textsc{TwoMax} that towards the end of the optimisation process the mutation potential increases as the current solution approaches an undiscovered, potentially promising optimum. This behaviour is against the design intentions of the inversely proportional mutation potential since in the final part of the optimisation process it gets harder to find improvements and high mutation potentials lead to many wasted fitness function evaluations.
The underlying reason of this behaviour in both \textsc{Cliff} and \textsc{TwoMax} landscapes was the necessity to follow a gradient away from the local optimum for finding the global one. Considering that this necessity would be ubiquitous in optimisation problems we propose a new method to control mutation potentials in this section. Algorithm~\ref{alg:hyp} uses a mutation potential inversely proportional to the current solution's Hamming distance to its origin, where the origin is defined as the ancestor of the current bitstring after the last removal of a solution due to ageing. 
We call the newly proposed mutation operator {\it Symmetric \expoHD}.
This mutation potential reliably decreases (at the same rate it would use if it was approaching the currently best seen local optimum) as the current solution improves and moves away from its origin up until it starts doing local search and finds a local optimum. Every time a local optimum is found, ageing is triggered after approximately $\tau$ steps and then both surviving and reinitialised individuals reset their origin to their own bitstring. 
 \begin{figure}[t!]
 \centering
  \includegraphics[width=0.15\textwidth]{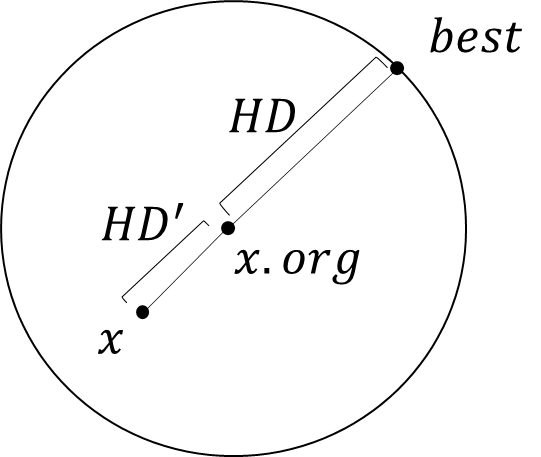}
 \caption{The geometric representation of the mutation potential of 
Algorithm~\ref{alg:hyp}. The mutation potential is determined according to the 
ratio of the Hamming distance between the current solution and its origin and 
the Hamming distance between its origin and the best seen solution.}
 \label{fig:plot}
 \end{figure}



\begin{algorithm} 
    \caption{(1+1) Opt-IA with Symmetric \expoHD}
    \label{alg:hyp}
    \begin{algorithmic}[1] 
            \STATE{Set $x \sim Unif(\{0,1\}^n)$ }
            \STATE {Set $x.origin:=x$, $x.age:=0$; $Best:= x$}
            \STATE {Evaluate $f(x)$;   } 
            \WHILE{termination condition not satisfied} 
            \STATE{$x.age:=x.age+1$};
            \STATE{Create $y$ by flipping at most $M:= \left\lceil n \cdot 
n^{-\frac{HD\left(x,x.org\right)}{\max\{HD\left(Best,x.org\right), 1\}}} 
\right\rceil$ distinct bits of $x$ selected uniformly at random one after 
another until a \textit{constructive mutation} happens;} 
			\STATE{$y.origin:=x.origin$};
			\IF{$f(y)> f(x)$}
			\STATE{$y.age:=0$};
			\IF{$f(y)\geq Best$}
			\STATE {$Best:=y$;}
			\ENDIF
			\ELSE
			\STATE{$y.age:=x.age$};
			\ENDIF
			\FOR{$w in \{x,y\}$}
			\IF { $w.age > \tau \wedge p_{die}=1/2>R \sim 
Unif(0,1)$}
			\STATE{Set $w \sim Unif(\{0,1\}^n)$ , $w.age=0$, 	}
			\STATE{Set $x.origin=x$, $y.origin=y$}
			\ENDIF
			\ENDFOR
			\STATE{Set $x=\arg\max\limits_{z\in\{x,y\}}f(z)$; }
            \ENDWHILE
    \end{algorithmic}
\end{algorithm}

The following theorem shows that once one local optimum has been identified
the mutation potential of Symmetric \expoHD decreases as both optima are approached as desired and the wished for speed-up in the runtime is achieved.
\begin{theorem}
The Algorithm~\ref{alg:hyp} with $\tau=\Omega(n^{1+\epsilon})$ for any 
arbitrarily small constant $\epsilon>0$ has expected runtime of 
$O(n^{3/2}\log{n})$ on \textsc{TwoMax}.
\end{theorem}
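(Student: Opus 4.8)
The plan is to follow the skeleton of the proof of Theorem~\ref{thm:twomaxexpo}, decomposing a run of Algorithm~\ref{alg:hyp} into (i) a first climb to a branch top of \textsc{TwoMax}, (ii) an ageing‑induced re‑initialisation, and (iii) a second climb towards the remaining optimum, and then summing over an expected constant number of re‑initialisation cycles. The single new ingredient to exploit is that the exponent of Symmetric \expoHD is measured against $x.\mathrm{org}$ (which is reset to the current string at every ageing event), not against the current \emph{best}; hence, while a solution \emph{recedes} from its origin along a slope, $HD(x,x.\mathrm{org})$ grows towards $HD(Best,x.\mathrm{org})$ and the potential \emph{decreases} from $n$ towards $\Theta(1)$ — in particular this happens while the algorithm climbs the second branch towards the still‑undiscovered optimum, which is exactly where Algorithm~\ref{alg:alg1+ageing} paid $\Theta(n^2\log n)$.

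\textbf{The two phases.}
For the first phase I would argue that, except for the very first iteration after (re)initialisation — where $HD(x,x.\mathrm{org})=0$ gives $M=n$, costing $O(n)$ evaluations and producing a constructive mutation since a random string has $\Theta(n)$ zero‑bits and one‑bits w.o.p. — we have $Best$ equal to the current solution throughout the climb (every improvement sets a new record on a \textsc{TwoMax} branch), so $HD(x,x.\mathrm{org})=HD(Best,x.\mathrm{org})$ and $M=\lceil n\cdot n^{-1}\rceil=1$. Thus Algorithm~\ref{alg:hyp} behaves like \oneonerlsone and reaches a branch top in $O(n\log n)$ expected evaluations, and w.o.p.\ before ageing can trigger: exactly as in Theorem~\ref{thm:twomaxexpo}, the probability of $\tau=\Omega(n^{1+\epsilon})$ consecutive non‑improving generations while the improvement probability stays $\ge 1/n$ is $(1-1/n)^{\tau}=e^{-\Omega(n^{\epsilon})}$, and a union bound over the $\le n$ improvements keeps this negligible. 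At a branch top no single‑bit flip is constructive, so after $\tau+n$ further generations ageing removes the current solution w.o.p.\ and a uniformly random string enters with its origin reset to itself and $Best$ equal to the discovered optimum. By binomial concentration this string lies on one of the two branches, and by symmetry of the initialisation it lies, with probability $\Omega(1)$, on the branch of the other optimum. On that branch, as $x$ climbs, $HD(x,x.\mathrm{org})$ grows from $0$ towards $HD(Best,x.\mathrm{org})=\Theta(n)$ — here we use that the two branch tops of \textsc{TwoMax} are complementary, so the undiscovered optimum is at Hamming distance $\approx HD(Best,x.\mathrm{org})$ from the origin — so the exponent rises from $0$ to $1$ and $M$ decays from $n$ to $\Theta(1)$; this is precisely the ideal \expoHD profile of \textsc{OneMax}, so the undiscovered optimum is reached in $O(n^{3/2}\log n)$ expected evaluations by essentially the telescoping estimate behind Corollary~\ref{cor:omexpohd}, with ageing not intervening because improvements occur within $O(n)$ generations $=o(\tau)$. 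When the random string instead lands on the branch of the already‑found optimum (also probability $\Omega(1)$), the same estimate gives $O(n^{3/2}\log n)$ to climb back there and the solution ages out again; an expected $O(1)$ cycles are needed, each contributing $O(n^{3/2}\log n+\tau)=O(n^{3/2}\log n)$, which yields the claimed bound. (If one counts a single optimum as sufficient, the first phase already finishes in $O(n\log n)=o(n^{3/2}\log n)$.)

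\textbf{Main obstacle.}
As in Theorem~\ref{thm:twomaxexpo}, the delicate part is the stop‑at‑first‑constructive‑mutation dynamics while $M$ is still a polynomial power of $n$: I must show that while $x$ is more than a constant away from a branch top, a hypermutation w.o.p.\ does not jump $x$ across to the other branch nor back onto a previously discovered optimum, so that the ``distance to origin'' used in the potential genuinely tracks the progress of the climb. This requires the Ballot‑theorem estimates of Lemma~\ref{thm:ballot}, now adapted to the potential $M=\lceil n\cdot n^{-HD(x,x.\mathrm{org})/HD(Best,x.\mathrm{org})}\rceil$ rather than the $n^{(n-\cdot)/n}$ of the static/non‑symmetric analysis, together with a separate treatment of the low‑order cases where $x$ is within a small constant Hamming distance of a branch top and of the single $M=n$ iteration that follows every (re)initialisation; these are the same kinds of case distinctions already handled in the proofs of Theorems~\ref{thm:twomaxexpo} and~\ref{thm:cliffexpo}.
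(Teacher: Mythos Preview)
Your two–phase decomposition and the handling of the first climb (RLS behaviour, no ageing before a branch top, then a forced re–initialisation) match the paper's proof almost verbatim. The divergence is entirely in the second climb.

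The paper does \emph{not} try to track the decay of $M$ level by level, does not invoke Corollary~\ref{cor:omexpohd}, and does not analyse branch switching or multiple restart cycles. It simply bounds $HD(Best,x.\mathrm{org})\le n/2+n^{2/3}$ by Chernoff, claims this yields an initial potential $M<n^{(n/2+n^{2/3})/n}=n^{1/2+o(1)}$, observes that $M$ can only decrease as $HD(x,x.\mathrm{org})$ grows, and then pessimistically plugs $M=O(\sqrt{n})$ into a coupon–collector bound over all levels to obtain $O(\sqrt{n}\cdot n\log n)=O(n^{3/2}\log n)$ in one line. So the paper's argument is far shorter than yours and does not require any of the Ballot–theorem machinery you flag as the ``main obstacle''.

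Where your proposal has a genuine gap is the assertion that after re–initialisation the potential follows ``precisely the ideal \expoHD profile of \textsc{OneMax}'' so that Corollary~\ref{cor:omexpohd} applies. It does not: the denominator in the exponent is $HD(Best,x.\mathrm{org})\approx n/2$, not $n$. Concretely, if the origin has $\approx n/2$ one–bits and you climb towards $0^n$, then at distance $d$ from $0^n$ you have $HD(x,x.\mathrm{org})\approx n/2-d$ and hence
\[
M\;\approx\;n^{\,1-\frac{n/2-d}{n/2}}\;=\;n^{2d/n},
\]
which is the \emph{square} of the ideal $n^{d/n}$ appearing in Corollary~\ref{cor:omexpohd}. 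The corresponding sum $\sum_{d=1}^{n/2}(n/d)\,n^{2d/n}$ is $\Theta(n^{2}/\log n)$, not $O(n^{3/2}\log n)$, so the direct appeal to the \textsc{OneMax} estimate does not deliver the bound you claim. (Note also that your own observation that ``$M=n$'' right after re–initialisation already shows the profile cannot be the ideal one, which starts at $\sqrt{n}$ from a central point.) To recover the paper's bound you would need, as the paper does, an argument that caps $M$ at $O(\sqrt n)$ uniformly rather than a level–wise decay matched to Corollary~\ref{cor:omexpohd}.
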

  \begin{proof}
The expected time until the first branch is optimised is $O(n\log{n})$ 
since the best seen search point is the current best individual and 
consequently the mutation potential is $M=1$. Since the improvement 
probability is at least $1/n$ and $\tau=\Omega(n^{1+\epsilon})$, the ageing 
operator does not trigger before finding one of the optima with overwhelming 
probability. Once one of the optima is found, the ageing reinitialises the 
individual while the first discovered optima stays as the current best seen 
search point. For the randomly reinitialised solution, the Hamming distance to 
the best seen binomially distributed with parameters $n$ and $1/2$. Using 
a Chernoff bound we can bound the distance to the previously seen 
optima by at most $n/2 + n^{2/3}$ with overwhelmingly high probability. This 
Hamming distance implies an initial mutation potential of $M< 
n^{\frac{\frac{n}{2}+n^{2/3}}{n}}=n^{\frac{1}{2}+\frac{1}{n^{1/3}}}$ which 
decreases as the individual increases its distance to the origin and can never 
go above its initial value where the distance is zero. 
Pessimistically assuming that the mutation potential will be 
$n^{\frac{1}{2}+\frac{1}{n^{1/3}}}=O(n^{1/2})$ throughout the run, we can obtain 
the above upper bound by summing over all levels and using coupon collector's 
argument.
\end{proof}

The following theorem shows that Symmetric \expoHD is also efficient for \textsc{Cliff}
\begin{theorem}
The Algorithm~\ref{alg:hyp} with $\tau=\Omega(n^{1+\epsilon})$ has expected runtime of 
$O(n^{3/2}\log{n}+ \tau n^{1/2} + \frac{n^{7/2}}{k^2})$ on \textsc{Cliff}$_k$. 
\end{theorem}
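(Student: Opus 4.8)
The plan is to mirror the structure of the proof of Theorem~\ref{thm:cliffexpo}, replacing every reference to the behaviour of \expoHD by the corresponding behaviour of Symmetric \expoHD, and checking that the three phases of that argument still go through. Recall the three phases: (i) reaching a local optimum of \textsc{Cliff} (a point with $n-k$ one-bits) from a reinitialised solution; (ii) waiting, via the cycle of ageing-driven reinitialisation and re-climbing, until the local optimum produces two consecutive improving offspring so that a point with $n-k+2$ one-bits survives; (iii) from such a point, climbing to the global optimum $1^n$ without ever jumping back below $n-k$ one-bits. The key observation that makes the Symmetric operator at least as good here is that, by construction, the mutation potential $M=\lceil n\cdot n^{-HD(x,x.\mathrm{org})/\max\{HD(Best,x.\mathrm{org}),1\}}\rceil$ starts at $M=n$ only when $x$ coincides with its origin, and decreases monotonically as $x$ moves away from the origin; in particular it never exceeds the value it would take if the origin were the previously discovered local optimum, which is exactly the bound used in Theorem~\ref{thm:cliffexpo}.

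First I would handle phase (i). After a reinitialisation, $x.\mathrm{origin}$ is reset to $x$ itself, and $Best$ is still the previously discovered local optimum; since the reinitialised point has Hamming distance $n/2\pm n^{2/3}$ to $Best$ with overwhelming probability (Chernoff), the initial potential is $M=\Omega(n^{1/2})$ and $M\le n^{1/2+o(1)}$. As the solution climbs, $HD(x,x.\mathrm{org})$ grows, so $M$ only decreases; pessimistically bounding $M$ by $O(n^{1/2})$ throughout and using that the improvement probability is always $\Omega(1)$ near fitness $0$ and at least $i/n$ in general (first constructive mutation), the expected time to return to a local optimum is $O(n^{3/2}\log n+\tau n^{1/2})$, exactly as before. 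The ageing-reinitialisation cycle argument is unchanged: each cycle costs $O(n^{3/2})$ function evaluations (plus the $\tau n^{1/2}$ term accounted once), and with probability $(1-p_{\mathrm{die}})(k/n)^2$ the local optimum produces the two consecutive improvements needed to land and keep a point with $n-k+2$ one-bits; its origin is then essentially the local optimum (distance $O(1)$), so its potential is $M<3$ for a long while.

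The substantive check — and the main obstacle — is phase (iii): confirming that from a point with $n-k+2$ one-bits the Symmetric operator still cannot leap back past the cliff. In Theorem~\ref{thm:cliffexpo} this used that the Hamming distance to the last local optimum is at most $2k<n/2$, so $M\le\sqrt n<n-k+2$, making a jump geometrically impossible in one hypermutation. For Algorithm~\ref{alg:hyp} the relevant quantity controlling $M$ is instead $HD(x,x.\mathrm{org})$ with $x.\mathrm{org}$ being the local optimum from which the escape started, divided by $HD(Best,x.\mathrm{org})$; I would argue that throughout the escape the current solution stays within the ``ball'' whose radius keeps $M=O(\sqrt n)$, by the same Ballot-theorem accounting used before: accepted offspring are improvements with conditional probability at least $1/2$, so the gain in one-bits tracks the gain in Hamming distance up to a constant factor, and since $k<n(1/4-\epsilon)$ the total distance never forces $M$ above $\sqrt n$ before $1^n$ is reached. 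Combined with $k=\Theta(n)$ this keeps $M<n-k+2$ throughout, so once a point with $n-k+2$ one-bits is present the global optimum is found in $O(n^{3/2}\log n)$ further evaluations with overwhelming probability. Summing the three phases gives the claimed $O(n^{3/2}\log n+\tau n^{1/2}+n^{7/2}/k^2)$ bound; I would finish by noting, as in Theorem~\ref{thm:twomaxexpo}, that the probability of ageing triggering prematurely during any climbing phase is $e^{-\Omega(n^\epsilon)}$ and hence negligible.
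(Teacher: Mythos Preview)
Your high-level plan---mirror the three phases of Theorem~\ref{thm:cliffexpo} and check that nothing breaks under Symmetric \expoHD---matches the paper, and your treatment of phases~(i) and~(ii) is essentially what the paper intends when it says the argument is ``almost identical''.

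Where you diverge from the paper is phase~(iii), which you call ``the substantive check---and the main obstacle''. In fact it is the \emph{trivial} part, and the paper's whole proof consists of pointing this out. When ageing removes the local optimum, line~19 of Algorithm~\ref{alg:hyp} resets the surviving offspring's origin to itself; thus $x.\mathrm{org}$ is the point with $n-k+1$ ones and $Best$ is still the local optimum, so $HD(Best,x.\mathrm{org})=1$. After a single further improvement $HD(x,x.\mathrm{org})\geq 1$, and hence
\[
M=\Bigl\lceil n\cdot n^{-HD(x,x.\mathrm{org})/\max\{HD(Best,x.\mathrm{org}),1\}}\Bigr\rceil
   =\bigl\lceil n\cdot n^{-HD(x,x.\mathrm{org})}\bigr\rceil=1.
\]
Since on the second slope the fitness never reaches $n-k$ until the global optimum, $Best$ is never updated, so $HD(Best,x.\mathrm{org})$ stays equal to~$1$ and $M=1$ for the \emph{entire} climb, not merely ``for a long while''. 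The process is literally randomised local search; a single bit flip cannot cross the cliff, and the Ballot-theorem bookkeeping you import from Theorem~\ref{thm:cliffexpo} is unnecessary. Your sentence ``the current solution stays within the ball whose radius keeps $M=O(\sqrt n)$'' is also pointing in the wrong direction: moving \emph{away} from the origin decreases $M$, so there is nothing to control.

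In short, your proof would go through, but you have misidentified where the work lies and carried over machinery that the Symmetric operator was designed precisely to make redundant. The paper's one-line argument---origin reset forces $M=1$, hence RLS to the top---is both simpler and is the conceptual point of the theorem.
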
 

\begin{proof}
The proof of the result is almost identical to the proof of 
Theorem~\ref{thm:cliffexpo}. The most important distinction is that once a 
solution with $n-k+2$ is created its mutation potential remains at $M=1$ until 
it finds the global optimum because when the ageing triggers the surviving 
solutions all reset their origin to their own bitstring, \emph{i.e.} start 
doing randomised local search. 
%
%
\end{proof}

\section{Conclusion}
We have presented an analysis of Inversely Proportional Hypermutations (IPH).
Previous theoretical studies have shown disappointing results concerning the IPH operators from the literature.
In this paper we have proposed a new IPH based on Hamming distance and exponential decay.
We have shown its effectiveness in isolation for unimodal functions compared to static hypermutations in the ideal conditions when the optimum is known.
Furthermore, we have provided a symmetric version of the operator for the complete Opt-IA AIS to be used in practical applications where the optimum is usually unknown.
We have proved its efficiency for two well-studied bimodal functions.
Future work should evaluate the performance of the proposed algorithm for combinatorial optimisation problems with practical applications.

%
\smallskip
\textbf{Acknowledgments:}
The research leading to these results has received funding from the EPSRC under 
grant agreement no \\ EP/M004252/1.

\bibliographystyle{ACM-Reference-Format}
\bibliography{mybib2} 


\begin{thebibliography}{22}


\ifx \showCODEN    \undefined \def \showCODEN     #1{\unskip}     \fi
\ifx \showDOI      \undefined \def \showDOI       #1{#1}\fi
\ifx \showISBNx    \undefined \def \showISBNx     #1{\unskip}     \fi
\ifx \showISBNxiii \undefined \def \showISBNxiii  #1{\unskip}     \fi
\ifx \showISSN     \undefined \def \showISSN      #1{\unskip}     \fi
\ifx \showLCCN     \undefined \def \showLCCN      #1{\unskip}     \fi
\ifx \shownote     \undefined \def \shownote      #1{#1}          \fi
\ifx \showarticletitle \undefined \def \showarticletitle #1{#1}   \fi
\ifx \showURL      \undefined \def \showURL       {\relax}        \fi
\providecommand\bibfield[2]{#2}
\providecommand\bibinfo[2]{#2}
\providecommand\natexlab[1]{#1}
\providecommand\showeprint[2][]{arXiv:#2}

\bibitem[\protect\citeauthoryear{Brownlee}{Brownlee}{2007}]%
        {Brownlee2007}
\bibfield{author}{\bibinfo{person}{Jason Brownlee}.}
  \bibinfo{year}{2007}\natexlab{}.
\newblock \bibinfo{booktitle}{{\em Clonal Selection Algorithms}}.
\newblock \bibinfo{type}{{T}echnical {R}eport}. \bibinfo{institution}{Complex
  Intelligent Systems Laboratory (CIS), Swinburne University of Technology}.
\newblock
\newblock
\shownote{Techincal report.}


\bibitem[\protect\citeauthoryear{Burnet}{Burnet}{1959}]%
        {Burnet1959}
\bibfield{author}{\bibinfo{person}{Frank~M. Burnet}.}
  \bibinfo{year}{1959}\natexlab{}.
\newblock \bibinfo{booktitle}{{\em The Clonal Selection Theory of Acquired
  Immunity}}.
\newblock \bibinfo{publisher}{Cambridge University Press}.
\newblock


\bibitem[\protect\citeauthoryear{Castro, de~Castro, and Timmis}{Castro
  et~al\mbox{.}}{2002}]%
        {DeCastroTimmisBook}
\bibfield{author}{\bibinfo{person}{Leandro~Nunes Castro},
  \bibinfo{person}{Leandro~Nunes de Castro}, {and} \bibinfo{person}{Jonathan
  Timmis}.} \bibinfo{year}{2002}\natexlab{}.
\newblock \bibinfo{booktitle}{{\em Artificial Immune Systems: A New
  Computational Intelligence Approach}}.
\newblock \bibinfo{publisher}{Springer Science and Business Media}.
\newblock


\bibitem[\protect\citeauthoryear{Corus, Oliveto, and Yazdani}{Corus
  et~al\mbox{.}}{2017}]%
        {CorusOlivetoYazdani2017}
\bibfield{author}{\bibinfo{person}{Dogan Corus}, \bibinfo{person}{Pietro~S.
  Oliveto}, {and} \bibinfo{person}{Donya Yazdani}.}
  \bibinfo{year}{2017}\natexlab{}.
\newblock \showarticletitle{On the Runtime Analysis of the {Opt-IA} Artificial
  Immune System}. In \bibinfo{booktitle}{{\em Proc. of GECCO 2017}}.
  \bibinfo{pages}{83--90}.
\newblock


\bibitem[\protect\citeauthoryear{Corus, Oliveto, and Yazdani}{Corus
  et~al\mbox{.}}{2018}]%
        {CorusOlivetoYazdani2018}
\bibfield{author}{\bibinfo{person}{Dogan Corus}, \bibinfo{person}{Pietro~Simone
  Oliveto}, {and} \bibinfo{person}{Donya Yazdani}.}
  \bibinfo{year}{2018}\natexlab{}.
\newblock \showarticletitle{Artificial Immune Systems Can Find Arbitrarily Good
  Approximations for the {NP-hard} Partition Problem}. In
  \bibinfo{booktitle}{{\em Proc. of PPSN 2018}}. \bibinfo{pages}{16--28}.
\newblock


\bibitem[\protect\citeauthoryear{Cutello, Nicosia, Pavone, and Timmis}{Cutello
  et~al\mbox{.}}{2007}]%
        {CutelloTEVC}
\bibfield{author}{\bibinfo{person}{Vincenzo Cutello}, \bibinfo{person}{Giuseppe
  Nicosia}, \bibinfo{person}{Mario Pavone}, {and} \bibinfo{person}{Jonathan
  Timmis}.} \bibinfo{year}{2007}\natexlab{}.
\newblock \showarticletitle{An Immune Algorithm for Protein Structure
  Prediction on Lattice Models}.
\newblock \bibinfo{journal}{{\em IEEE Trans. Evol. Comp.\/}}
  \bibinfo{volume}{11}, \bibinfo{number}{1} (\bibinfo{year}{2007}),
  \bibinfo{pages}{101--117}.
\newblock


\bibitem[\protect\citeauthoryear{Cutello, Pavone, and Timmis}{Cutello
  et~al\mbox{.}}{2006}]%
        {CutelloPavoneTimmis2006}
\bibfield{author}{\bibinfo{person}{Vincenzo Cutello}, \bibinfo{person}{Mario
  Pavone}, {and} \bibinfo{person}{Jonathan Timmis}.}
  \bibinfo{year}{2006}\natexlab{}.
\newblock \showarticletitle{An Immune Algorithm for Protein Structure
  Prediction on Lattice Models}.
\newblock \bibinfo{journal}{{\em IEEE Trans. Evol. Comp.\/}}
  \bibinfo{volume}{10} (\bibinfo{year}{2006}), \bibinfo{pages}{844--861}.
\newblock


\bibitem[\protect\citeauthoryear{de~Castro and Zuben}{de~Castro and
  Zuben}{2002}]%
        {DecastroVonzuben2002}
\bibfield{author}{\bibinfo{person}{Leonardo~N. de Castro} {and}
  \bibinfo{person}{Fernando J.~Von Zuben}.} \bibinfo{year}{2002}\natexlab{}.
\newblock \showarticletitle{Learning and Optimization Using the Clonal
  Selection Principle}.
\newblock \bibinfo{journal}{{\em IEEE Trans. Evol. Comp.\/}}
  \bibinfo{volume}{6}, \bibinfo{number}{3} (\bibinfo{year}{2002}),
  \bibinfo{pages}{239--251}.
\newblock


\bibitem[\protect\citeauthoryear{Droste, Jansen, and Wegener}{Droste
  et~al\mbox{.}}{2002}]%
        {DrosteJansenWegener2002}
\bibfield{author}{\bibinfo{person}{Stefan Droste}, \bibinfo{person}{Thomas
  Jansen}, {and} \bibinfo{person}{Ingo Wegener}.}
  \bibinfo{year}{2002}\natexlab{}.
\newblock \showarticletitle{On the Analysis of the (1+ 1) Evolutionary
  Algorithm}.
\newblock \bibinfo{journal}{{\em Theor. Comp. Sci.\/}} \bibinfo{volume}{276},
  \bibinfo{number}{1-2} (\bibinfo{year}{2002}), \bibinfo{pages}{51--81}.
\newblock


\bibitem[\protect\citeauthoryear{Feller}{Feller}{1968}]%
        {feller1968}
\bibfield{author}{\bibinfo{person}{William Feller}.}
  \bibinfo{year}{1968}\natexlab{}.
\newblock \bibinfo{booktitle}{{\em An Introduction to Probability Theory and
  Its Applications}}.
\newblock \bibinfo{publisher}{John Wiley \& Sons}.
\newblock


\bibitem[\protect\citeauthoryear{Friedrich, Oliveto, Sudholt, and
  Witt}{Friedrich et~al\mbox{.}}{2009}]%
        {FriedrichOSWECJ09}
\bibfield{author}{\bibinfo{person}{Tobias Friedrich},
  \bibinfo{person}{Pietro~Simone Oliveto}, \bibinfo{person}{Dirk Sudholt},
  {and} \bibinfo{person}{Carsten Witt}.} \bibinfo{year}{2009}\natexlab{}.
\newblock \showarticletitle{Analysis of Diversity-Preserving Mechanisms for
  Global Exploration}.
\newblock \bibinfo{journal}{{\em Evolutionary Computation\/}}
  \bibinfo{volume}{17}, \bibinfo{number}{4} (\bibinfo{year}{2009}),
  \bibinfo{pages}{455--476}.
\newblock
\showDOI{%
\url{https://doi.org/10.1162/evco.2009.17.4.17401}}


\bibitem[\protect\citeauthoryear{Horoba, Jansen, and Zarges}{Horoba
  et~al\mbox{.}}{2009}]%
        {HorobaJansenZarges09}
\bibfield{author}{\bibinfo{person}{Christian Horoba}, \bibinfo{person}{Thomas
  Jansen}, {and} \bibinfo{person}{Christine Zarges}.}
  \bibinfo{year}{2009}\natexlab{}.
\newblock \showarticletitle{Maximal Age in Randomized Search Heuristics with
  Aging}. In \bibinfo{booktitle}{{\em Proc. of GECCO 2009}}.
  \bibinfo{pages}{803--810}.
\newblock


\bibitem[\protect\citeauthoryear{Jansen and Zarges}{Jansen and Zarges}{2011a}]%
        {JansenZarges2011c}
\bibfield{author}{\bibinfo{person}{Thomas Jansen} {and}
  \bibinfo{person}{Christine Zarges}.} \bibinfo{year}{2011}\natexlab{a}.
\newblock \showarticletitle{On the Role of Age Diversity for Effective Aging
  Operators}.
\newblock \bibinfo{journal}{{\em Evolutionary Intelligence\/}}
  \bibinfo{volume}{4}, \bibinfo{number}{2} (\bibinfo{year}{2011}),
  \bibinfo{pages}{99--125}.
\newblock


\bibitem[\protect\citeauthoryear{Jansen and Zarges}{Jansen and Zarges}{2011b}]%
        {JansenZarges2011}
\bibfield{author}{\bibinfo{person}{Thomas Jansen} {and}
  \bibinfo{person}{Christine Zarges}.} \bibinfo{year}{2011}\natexlab{b}.
\newblock \showarticletitle{Variation in Artificial Immune Systems:
  Hypermutations with Mutation Potential}. In \bibinfo{booktitle}{{\em Proc. of
  ICARIS 2011}}. \bibinfo{pages}{132--145}.
\newblock


\bibitem[\protect\citeauthoryear{Oliveto and Sudholt}{Oliveto and
  Sudholt}{2014}]%
        {OlivetoSudholt2014}
\bibfield{author}{\bibinfo{person}{Pietro~S. Oliveto} {and}
  \bibinfo{person}{Dirk Sudholt}.} \bibinfo{year}{2014}\natexlab{}.
\newblock \showarticletitle{On the Runtime Analysis of Stochastic Ageing
  Mechanisms}. In \bibinfo{booktitle}{{\em Proc. of GECCO 2014}}.
  \bibinfo{pages}{113--120}.
\newblock


\bibitem[\protect\citeauthoryear{Oliveto, Sudholt, and Zarges}{Oliveto
  et~al\mbox{.}}{2018}]%
        {OlivetoSudholtZarges2018}
\bibfield{author}{\bibinfo{person}{Pietro~S. Oliveto}, \bibinfo{person}{Dirk
  Sudholt}, {and} \bibinfo{person}{Christine Zarges}.}
  \bibinfo{year}{2018}\natexlab{}.
\newblock \showarticletitle{On the benefits and risks of using fitness sharing
  for multimodal optimisation}.
\newblock \bibinfo{journal}{{\em Theoretical Computer Science\/}}
  (\bibinfo{year}{2018}), \bibinfo{pages}{--}.
\newblock
\showDOI{%
\url{https://doi.org/doi.org/10.1016/j.tcs.2018.07.007}}


\bibitem[\protect\citeauthoryear{Oliveto and Witt}{Oliveto and Witt}{2014}]%
        {OlivetoWitt2014}
\bibfield{author}{\bibinfo{person}{Pietro~S Oliveto} {and}
  \bibinfo{person}{Carsten Witt}.} \bibinfo{year}{2014}\natexlab{}.
\newblock \showarticletitle{On the Runtime Analysis of the Simple Genetic
  Algorithm}.
\newblock \bibinfo{journal}{{\em Theoretical Computer Science\/}}
  \bibinfo{volume}{545} (\bibinfo{year}{2014}), \bibinfo{pages}{2€--19}.
\newblock


\bibitem[\protect\citeauthoryear{Oliveto and Witt}{Oliveto and Witt}{2015}]%
        {OlivetoWitt2015}
\bibfield{author}{\bibinfo{person}{Pietro~S Oliveto} {and}
  \bibinfo{person}{Carsten Witt}.} \bibinfo{year}{2015}\natexlab{}.
\newblock \showarticletitle{Improved time complexity analysis of the simple
  genetic algorithm}.
\newblock \bibinfo{journal}{{\em Theoretical Computer Science\/}}
  \bibinfo{volume}{605} (\bibinfo{year}{2015}), \bibinfo{pages}{21--41}.
\newblock


\bibitem[\protect\citeauthoryear{Osuna and Sudholt}{Osuna and Sudholt}{2017}]%
        {CovantesOsunaSudholt2017}
\bibfield{author}{\bibinfo{person}{Edgar~Covantes Osuna} {and}
  \bibinfo{person}{Dirk Sudholt}.} \bibinfo{year}{2017}\natexlab{}.
\newblock \showarticletitle{Analysis of the clearing diversity-preserving
  mechanism}. In \bibinfo{booktitle}{{\em Proceedings of Foundations of Genetic
  Algorithms (FOGA 2017)}}. \bibinfo{publisher}{ACM Press},
  \bibinfo{pages}{55--63}.
\newblock


\bibitem[\protect\citeauthoryear{Sudholt}{Sudholt}{2019}]%
        {SudholtBookChapter2019}
\bibfield{author}{\bibinfo{person}{Dirk Sudholt}.}
  \bibinfo{year}{2019}\natexlab{}.
\newblock \showarticletitle{The Benefits of Population Diversity in
  Evolutionary Algorithms: A Survey of Rigorous Runtime Analyses}.
\newblock In \bibinfo{booktitle}{{\em Theory of Randomized Search Heuristics in
  Discrete Search Spaces}}, \bibfield{editor}{\bibinfo{person}{Benjamin Doerr}
  {and} \bibinfo{person}{Frank Neumann}} (Eds.). \bibinfo{publisher}{Spinger},
  \bibinfo{pages}{--}.
\newblock
\newblock
\shownote{https://arxiv.org/abs/1801.10087.}


\bibitem[\protect\citeauthoryear{Whitley}{Whitley}{1989}]%
        {Whitley1989}
\bibfield{author}{\bibinfo{person}{Darrell Whitley}.}
  \bibinfo{year}{1989}\natexlab{}.
\newblock \showarticletitle{The Genitor algorithm and selection pressure: why
  rank-based allocation of reproductive trials is best}. In
  \bibinfo{booktitle}{{\em Proceedings of the Third International Conference on
  Genetic Algorithms}}. \bibinfo{pages}{116--121}.
\newblock


\bibitem[\protect\citeauthoryear{Zarges}{Zarges}{2008}]%
        {Zarges2008}
\bibfield{author}{\bibinfo{person}{Christine Zarges}.}
  \bibinfo{year}{2008}\natexlab{}.
\newblock \showarticletitle{Rigorous Runtime Analysis of Inversely Fitness
  Proportional Mutation Rates}. In \bibinfo{booktitle}{{\em Proc. of PPSN~X}}.
  \bibinfo{pages}{112--122}.
\newblock


\end{thebibliography}

\appendix 

\section{Appendix}
\textbf{Proof of Theorem~\ref{th:expoF(x)-UPonOM} } 
\begin{proof}
By Chernoff bounds, the initialised solution has at most $n/2+\epsilon n$ 0-bits w.o.p for any arbitrary small $\epsilon=\Theta(1)$. The probability of improvement in the first mutation step is at least $i/n$ with $i$ showing the number of 0-bits. As we need at most $n/2+\epsilon n$ improvements and each time the mutation fails to make an improvement at least $n^{1-\frac{n-i}{n}}$ fitness function evaluation would get wasted, the total expected time to find the optimum will be $E(T) \leq \sum_{i=1}^{n/2+\epsilon n} \frac{n}{i} \cdot n^{\frac{i}{n}} \leq n^{3/2+\epsilon} \log n$.
To prove the lower bound, consider $i$ as the number of 0-bits. By Chernoff 
bounds, $i$ is at least $n/2-\epsilon n$ 0-bits in the initialised bit string. 
The number of wasted fitness function evaluations at each failure is 
$n^{\frac{i}{n}}$. If we consider the time spent between levels $n(1/2 + 
\epsilon/2)$ and $n(1/2+\epsilon)$, we get the expected time of 
$n^{\frac{n/2-\epsilon n}{n}} \cdot \sum_{i=n/2+n\epsilon/2 }^{n/2+\epsilon 
n} \Omega(1)=n^{1/2-\epsilon} \cdot \epsilon n/2 \cdot \Omega(1)= 
\Omega(n^{3/2-\epsilon})$.

\end{proof}

\textbf{Proof of Theorem~\ref{thm:expohdLO}}
\begin{proof}
The proof is similar to the proof of Theorem \ref{th:linHD-LO} however each 
failure in improvement yields $n^{HD/n}$ wasted fitness function evaluations. 
The expected time to find the optimum is at most $\sum_{i=1}^{n} n \cdot 
n^{HD/n}$ with $i$ showing the number of leading 1-bits. Knowing that the bits 
after the leading ones are uniformly distributed, by Chernoff bounds the number 
of 0-bits (Hamming distance to the $opt$) is less than $1/2 (n-i)+\epsilon n$ 
w.o.p. Hence, the expected time to optimise \textsc{LeadingZeros} is $\sum_{i=1}^{n} n \cdot 
n^{\frac{1/2(n-i)+\epsilon n}{n}}= n \sum_{i=1}^{n} 
n^{\frac{1}{2}-\frac{i}{2n}+\epsilon}=n \cdot n^{1/2+\epsilon} \sum_{i=1}^{n} 
n^{-\frac{i}{2n}}$. Knowing that 
$\sum_{i=0}^{\infty}n^{-\frac{i}{2n}}=\frac{1}{1-n^{(-1/n)}}$ and $n^{-1/n} \leq 
\left(1-\frac{\ln n}{n}\right)(1-o(1))$ (Lemma~\ref{lem:expo}) 
, we get 
$1-n^{(-1/n)} \leq \frac{n}{\ln n}(1+o(1))$. Hence, the expected time is $E(T) 
\leq n^{3/2+\epsilon} \cdot O(\frac{n}{\ln n})=O(\frac{n^{5/2+\epsilon}}{\ln 
n})$.

The proof for lower bound is similar to the proof of Theorem \ref{th:linHD-LO}. 
Here we have $E(f_i) = \frac{i}{n} \left( n^{HD/n}+ E(f_i)\right) + \frac{1}{n} 
\cdot 1 +\frac{n-i-1}{n} \left( 1+ E(f_i)\right)$. Replacing $HD$ with 
$(n-i)/2-\epsilon n$ and then solving this equation for $E(f_i)$ gives us 
$E(f_i)=i n^{1/2-\epsilon-i/(2n)}+n-i$. Then, we get the expected runtime of  
\begin{align*}
&(1-2^{-n/2})\sum_{i=n/2}^{n} E(f_i)\geq \Omega(1) \sum_{i=n/2-\epsilon 
n}^{n/2-\epsilon n/2} 
\left( in^{1/2-\epsilon-i/(2n)}+n-i)\right)\\ &\geq \Omega(1) 
\sum_{i=n/2-\epsilon 
n}^{n/2-\epsilon n/2} 
\left( in^{1/4-\epsilon}\right)= \Omega(n^{9/4-\epsilon}).
\end{align*}
\end{proof}

\end{document}